\newcommand{\NN}{{\mathbb N}}
\newcommand{\RR}{{\mathbb R}}
\newcommand{\cM}{{\mathcal{M}}}
\newcommand{\Prob}{{\mathbb{P}}}
\newcommand{\cP}{{\mathcal{P}}}
\newcommand{\argmax}{\mathop{\mathrm{argmax}}}  
\newcommand{\argmin}{\mathop{\mathrm{argmin}}}
\newtheorem{problem}[theorem]{Problem}
\begin{document}

\title{Learning and Planning for Time-Varying MDPs \\ Using Maximum Likelihood Estimation}

\author{\name Melkior Ornik \email mornik@illinois.edu \\
       \addr Department of Aerospace Engineering and the Coordinated Science Laboratory\\
       University of Illinois at Urbana-Champaign\\
       Urbana, IL 61801, USA
       \AND
       \name Ufuk Topcu \email utopcu@utexas.edu \\
       \addr Dept. of Aero. Eng. and Eng. Mechanics and the Oden Inst. for Computational Eng. and Sciences\\
       University of Texas at Austin\\
       Austin, TX 78712, USA}

\editor{Amos Storkey}

\maketitle

\begin{abstract}%
This paper proposes a formal approach to online learning and planning for agents operating in a priori unknown, time-varying environments. The proposed method computes the maximally likely model of the environment, given the observations about the environment made by an agent earlier in the system run and assuming knowledge of a bound on the maximal rate of change of system dynamics. Such an approach generalizes the estimation method commonly used in learning algorithms for unknown Markov decision processes with time-invariant transition probabilities, but is also able to quickly and correctly identify the system dynamics following a change. Based on the proposed method, we generalize the exploration bonuses used in learning for time-invariant Markov decision processes by introducing a notion of uncertainty in a learned time-varying model, and develop a control policy for time-varying Markov decision processes based on the exploitation and exploration trade-off. We demonstrate the proposed methods on four numerical examples: a patrolling task with a change in system dynamics, a two-state MDP with periodically changing outcomes of actions, a wind flow estimation task, and a multi-armed bandit problem with periodically changing probabilities of different rewards.
\end{abstract}

\begin{keywords}
  Markov decision processes, changing environment, maximum likelihood estimation, online learning, uncertainty quantification
\end{keywords}

\section{Introduction}
A variety of intelligent agents---notably, autonomous systems---are commonly required to operate in unknown environments \citep{Elf90,Gaoetal14,Heretal15}, necessitating the use of learning in order to complete their tasks. While methods for learning and planning for agents in unknown environments exist in a variety of frameworks \citep{Foretal95,Sanetal97,KeaSin02,AlTetal07,SutBar18}, most of them assume that the environment in which the agent operates is unchanged over the course of the agent's operation. Such an assumption allows for construction of an estimate of the dynamics by performing repeated experiments and observing the outcomes \citep{SutBar18,Sutetal99, KeaSin02,KolNg09,StrLit08,FuTop14,HauSto15,Ornetal18}. However, it is often not realistic for systems operating on long-term missions outside a strictly controlled environment. Taking an example of an extraterrestrial rover mission, changes in the environment may be a consequence of regular, predictable events such as intra-day or seasonal temperature variations \citep{Vasetal17} or may result from more complex phenomena that are difficult to predict, e.g., terrain changes due to wind---see the work of \citet{Zim14} for a detailed study. 

In contrast to assuming time-invariance, accounting for time-varying changes in the environment presents a major challenge to learning and planning. A naive approach---restarting the learning process whenever the environment changes---does not make sense: the environment is possibly continually changing. Restarting the learning process whenever the environment \emph{sufficiently} changes, or sufficient length of time passes, would both neglect the environmental changes between the process restarts and rely on heuristics in deciding when to restart learning. Restarting too often will lead to the agent spending too much time on learning, and lacking time to perform its task. On the other hand, restarting the learning process too rarely can lead to unreliable learning outcomes. A sliding window approach \citep{Huaetal10, Gajetal18}, where only information gathered within a fixed period of time prior to the time of learning is used, suffers from a similar issue---a short window provides few samples to learn from, while a long window provides samples corresponding to significantly different environments.

This paper develops a method that neither assumes that the environment is time-invariant, nor uses intervals of fixed length to artificially adapt the agent to a changing environment while discarding all older learned information. The framework of this paper is one of time-varying Markov decision processes (TVMDPs) as described by \citet{LiuSuk18}, \citet{Lietal19}, and \citet{Ortetal20}: discrete-time, finite-state stochastic control processes with finitely many actions, where transitions from one state to another are governed by a time-varying transition probability function. Building on the maximum-likelihood approach to learning and planning in unknown time-invariant environments \citep{Veretal07,Stretal09,Filetal10}, we propose a \emph{change-conscious maximum likelihood estimate} (CCMLE) that computes a time-varying transition probability function that is \emph{maximally likely}, given (i) the previously observed outcomes of the agent's actions and (ii) a priori fixed bounds on the rate of change of the transition probabilities. In our motivational narrative of an agent on a long-term mission, such bounds may come from prior study of the causes behind possible environmental changes---for example, wind or temperature change \citep{Fenetal05}. An attractive feature of the CCMLE approach is its interpretation as a generalization of a standard estimation method on time-invariant MDPs, described by \citet{StrLit08}. Namely, if the environment is time-invariant the CCMLE matches the estimates provided by the classical method used for time-invariant MDPs.

Using the proposed estimation method, we additionally develop an \emph{active online learning} policy: we define a measure of \textit{uncertainty of a CCMLE}, and by performing actions that seek to minimize the uncertainty, ensure that the agent estimates the system dynamics as quickly as possible during a single system run. We incorporate such a policy into a joint learning and planning mechanism, enabling the agent to perform its task while learning about its unknown and changing environment.


While the framework of TVMDPs has been described by \citet{LiuSuk18}, the work contained therein solely discusses optimal control policies for a priori known TVMDPs. Similarly, \citet{Lietal19} discuss online design of control policies for TVMDPs based on full knowledge of past transition probabilities. The work of \citet{Ortetal20} does consider planning for TVMDPs with unknown transition probabilities and comes closest to our work; however, its underlying estimation method relies on assuming time-invariance. Instead of directly aiming to produce a correct estimate of the TVMDP, it approaches planning by producing a control policy aware of the possible incorrectness in its estimate. While the quantification of incorrectness in the estimate has a similar motivation as our notion of uncertainty in the CCMLEs, it relies on additional assumptions about the TVMDP. Additionally, in contrast with our method of active learning, the approach of \citet{Ortetal20} does not seek to actively reduce this uncertainty during the mission, but instead produces a policy robust to the uncertainty.

In addition to relatively new work on TVMDPs, previous similar frameworks where learning has been discussed include the following:
\begin{itemize}
\item Time-dependent MDPs \citep{BoyLit01}, where the dependence of transition probabilities on time is encoded by appending a continuous time stamp as a coordinate in the state space, thus yielding a continuous-state MDP as defined by \citet{BoyLit01} and \citet{van12}.
\item Time-varying Markov-switching models \citep{Fil94}, which do not include a notion of a control action.
\item Semi-Markov decision processes and related frameworks \citep{Ros70,Sutetal99,YouSim04}, where the transition probabilities themselves are time-invariant, but the time needed to perform a transition may vary.
\item $\varepsilon$-stationary MDPs \citep{Kaletal98,Szietal02,CsaMon08}, which allow for time-varying transition probabilities only inasmuch as they remain close to constant over time.
\end{itemize}
Learning and planning for agents operating in the last two frameworks have been discussed at length \citep{Sutetal99,Szietal02}. However, the nature of these frameworks---with transition probabilities that, potentially disregarding bounded disturbances, do not change over time---yields learning methods that are not useful in the setting where transition probabilities may significantly vary over long periods of time. Learning of time-varying Markov-switching models \citep{Dieetal94} is more similar to the problem of learning for TVMDPs. However, as that framework does not include explicit decision-making, learning simply relies on passively collecting data from multiple system runs. While similar in the estimation part---although with technical differences due to different assumptions on previous knowledge---our proposed method seeks to make the agent actively learn by performing those actions that are expected to reduce the uncertainty in the learned model. Finally, time-dependent MDPs fall into the category of continuous-state MDPs as defined by \citet{van12}. However, learning methods for continuous-state MDPs are often computationally intractable \citep{van12}.

The organization, main contributions, and key theoretical results of this paper are as follows:
\begin{itemize}
    \item Section~\ref{probdef} recalls the definition of a TVMDP and poses problems of optimal learning---during a single system run---and optimal control for an agent operating in a TVMDP with a priori unknown transition probabilities.
    \item Section~\ref{estim} introduces the key element of the proposed learning method: a change-conscious maximum likelihood estimate (CCMLE) of the TVMDP's transition probabilities, given prior observations and knowledge about the rate of change of probabilities.
    \begin{itemize}
        \item Proposition~\ref{gener} shows that CCMLE equals the estimates produced by standard estimation in the case of time-invariant MDPs.
        \item Theorem~\ref{firbig} and Theorem~\ref{secbig} describe the CCMLE in two particular cases of time-varying MDPs.
    \end{itemize}
    \item Section \ref{unc} proposes a measure of uncertainty of a CCMLE.
    \begin{itemize}
        \item Theorem~\ref{unc} geometrically describes the set of all equally likely CCMLEs, given the observations, if such an estimate is not unique.
        \item Theorem~\ref{next} relates the proposed measure to measures of uncertainty used in learning and planning techniques for time-invariant MDPs.
    \end{itemize}
    \item Section \ref{pols} uses the proposed notion of uncertainty to propose optimal learning and control policies for an agent operating in an unknown, time-varying environment. In particular, Section \ref{learnp} considers a policy that minimizes the agent's uncertainty, while Section \ref{contrp} uses the measure of uncertainty as an ``exploration bonus'' in proposing a control policy based on the exploration-exploitation framework, seeking to minimize the uncertainty while directing the agent to progress towards its objective.
    \item Section~\ref{sim} illustrates the developed theory by considering learning and planning for an agent in four scenarios: Section \ref{sim1} discusses a scenario of a one-off change in transition probabilities during a patrolling mission. Section \ref{sim2} considers a setting of regular, periodic changes in transition probabilities on a two-state TVMDP, while Section~\ref{mab} considers a more involved multi-armed bandit setting. Section~\ref{wfe} describes estimation of wind flow using a pilot balloon with CCMLE. In these sections, we compare results attained by the proposed method to the methods introduced in previous work, and show that the proposed method indeed leads to smaller estimation errors and expedited completion of control objectives. 
    \item Proofs of theoretical results are provided in Appendix~\ref{appx}.
\end{itemize}

\noindent\textit{Notation.} Symbol $\NN_0$ denotes all nonnegative integers; $\NN$ denotes all positive integers. Function $d:\RR^n\times\RR^n\to[0,+\infty)$ denotes the Euclidean distance on $\RR^n$. For a set $\cP\subseteq\RR^n$, $\mathrm{diam}(\cP)$ denotes the diameter of the set: $\mathrm{diam}(\cP)=\sup\{d(x,y)~|~x,y\in\cP\}$. For a set $X$, $|X|$ denotes its cardinality. 

\section{Problem Statement}
\label{probdef}

Consider a time-varying Markov decision process (TVMDP) $\cM$ as described by \citet{LiuSuk18}: $\cM=(S,A,P)$, where $S=\{s^1,\ldots,s^n\}$ is the state space, $A$ is the set of actions, and $P:S\times A\times S\times\NN_0\to [0,1]$ is a transition probability function. Namely, $P(s,a,s',t)$ denotes the probability that an agent positioned at state $s\in S$ at time $t\in\NN_0$ will, after performing action $a\in A$, transition to a state $s'\in S$ at time $t+1$. Naturally, $\sum_{s'\in S}P(s,a,s',t)=1$ for all $s\in S$, $a\in A$, $t\in\NN_0$. If $s_0$ is the agent's initial state, the agent's \emph{path} until time $T\in\NN_0$ is denoted by $\sigma=(s_0,\ldots,s_T)$, while the agent's corresponding actions are given by $\alpha=(a_0,\ldots,a_{T-1})$. A \emph{time-varying policy} on a TVMDP $\cM$ is a sequence $\pi=(\pi_1,\pi_2,\ldots)$, where $\pi_t\in A$ may depend on time $t$, the agent's state $s_t$, as well as the agent's previous states $s_0,\ldots,s_{t-1}$. If the transition probability function $P$ is a priori unknown, $\pi_t$ may also depend on the agent's estimate of $P$ at time $t$. A \emph{time-invariant policy} on a TVMDP $\cM$ is a policy that depends solely on the agent's state $s_t$, i.e., with a slight abuse of notation, $\pi:S\to A$.

TVMDPs seek to model an environment in which the agent dynamics may change over time. This framework is a generalization of classical time-invariant Markov decision processes (MDPs); in standard MDPs, transition probability function $P$ is not dependent on time. In the remainder of the paper, if a transition probability is time-invariant, we will denote it by $P(s,a,s',*)$.

We assume that the transition probability function $P$ is unknown to an agent at the beginning of the system run, i.e., prior to $t=0$. As in the previous work on time-invariant MDPs \citep{BraTen02,KeaSin02,KolNg09,FuTop14,Ornetal18}, we study two objectives: 
\begin{enumerate}[(i)]
\item learning the transition probabilities as efficiently and correctly as possible during a single system run, and
\item for a known reward function $R: S\times A\to\RR$, maximizing the agent's expected collected reward over a period of time.
\end{enumerate}

In time-invariant MDPs, because the transition probabilities do not change over time, it is possible to learn the transition probabilities at every state-action pair $(s,a)$ with an arbitrarily small error, by repeatedly visiting the state $s$, performing the action $a$, and observing the outcome. By the law of large numbers, \begin{equation}
\label{lln}
\lim_{\#(s,a)\to\infty}\frac{\#(s,a,s')}{\#(s,a)}=P(s,a,s',*)
\end{equation}
with probability $1$, where $\#(s,a)$ denotes the number of times that action $a$ was performed at state $s$, and $\#(s,a,s')$ denotes the number of times that performing action $a$ at state $s$ led to the agent immediately moving to state $s'$. Hence, under some ergodicity assumptions, it is possible to learn the transition probabilities $P(\cdot,\cdot,\cdot,*)$ within a single system run and with an arbitrarily small error. After learning these probabilities, it is then straightforward \citep{Put05} to determine a policy that comes arbitrarily close to maximizing the agent's expected collected reward, thus solving objective (ii).

In the case of TVMDPs, it is impossible to learn the transition probabilities during a single system run with an arbitrarily small error, as these probabilities may continually change. In fact, if the transition probabilities at different times were entirely independent, the agent would only have one time step (i.e., one action) to learn the transition probabilities at $|S||A|$ state-action pairs. In such a case, any attempt at learning is meaningless. Even if the transition probabilities are not independent, i.e., it is known that there exists $\varepsilon_t\in [0,1)$ such that 
\begin{equation}
\label{cons}
|P(s,a,s',t+1)-P(s,a,s',t)|\leq\varepsilon_t
\end{equation}
for all $s,s'\in S$, $a\in A$ and $t\in\NN_0$,
perfect knowledge of \emph{all} transition probabilities $P(s,a,s',\tau)$ for $\tau<T$ only implies that
$P(s,a,s',T)\in[P(s,a,s',T-1)-\varepsilon_{T-1},P(s,a,s',T-1)+\varepsilon_{T-1}]$ and just one sample collected from $(s,a)$ at time $T$ is not sufficient to correctly determine the value of $P(s,a,s',T)$.

The above discussion behooves us to interpret objective (i) in the following way.

\begin{problem}[Optimal learning in TVMDPs]
Determine a policy $\pi^*$ such that, at every time $t\geq 0$, after taking action $\pi^*_t\in A$, the {\em uncertainty} in the estimated transition probabilities $P(\cdot,\cdot,\cdot,t)$ is minimized.
\end{problem}

We purposefully leave the notion of uncertainty vague at this point. Sections \ref{estim}, \ref{unc}, and \ref{learnp} of this paper will be dedicated to designing a meaningful estimate of transition probabilities, defining the notion of uncertainty of such an estimate, and determining a policy $\pi$ that solves the optimal learning problem.

Largely for notational purposes, we express objective (ii) in terms of expected average rewards on an infinite system run. We emphasize that the reward function itself is assumed to be known.

\begin{problem}[Optimal control in TVMDPs]
\label{proopt}
 Determine a policy $\pi^*$ that maximizes
$$\mathbb{E}\left[\liminf_{T\to\infty}\frac{\sum_{t=0}^T R(s_t,\pi^*_t)}{T}\right]\textrm{,}$$
where $s_t$ is the agent's state at time $t$.
\end{problem}

In both the optimal learning and optimal control problems, we allow $\pi^*_t$ to depend only on the agent's path until time $t$ and its estimates of transition probabilities. In other words, in line with the assumption of a time-varying nature of the environment, we do not allow for learning from repeated runs.

By appending the state space $S$ by the time coordinate and interpreting the transition probabilities and rewards as being defined on the state space $S\times\NN_0$,
the optimal control problem on TVMDP $\cM$ is equivalent to a standard optimal control problem on a countably infinite MDP $\hat{\cM}$, with a finite set of actions $A$ and an averaged reward objective. A detailed discussion of such a problem in the context of infinite MDPs is given by \citet{Put05}. With a slight change to a discounted reward objective, \citet{Put05} shows that such a problem admits a stationary optimal policy, which naturally translates to a time-varying optimal policy on $\cM$. However, such a policy can only be found if the transition probabilities are a priori \emph{known}. Finding an optimal control policy under the stipulation that $P$ is unknown at the beginning of the system run---thus exactly solving the optimal control problem---is impossible. In Section \ref{contrp} we will propose a method motivated by the exploration-exploitation framework of previous work \citep{BraTen02,KeaSin02,KolNg09,FuTop14,Ornetal18}, seeking to actively learn about the transition probabilities in order to be able to collect higher rewards.

We now proceed to discuss the initial building block of our method for learning and planning in TVMDPs: estimating the transition probabilities.

\section{Change-Conscious Maximum Likelihood Estimate}
\label{estim}

The objective of this section is to develop a method for estimating the transition probabilities $P(s,a,s',t)$, $t\leq T$, given the observations of the agent's motion until time $T$. To this end, we develop a {\em change-conscious maximum likelihood estimate} (CCMLE) which produces a set of probability distributions $P(s,a,\cdot,t)$ for all $s\in S$, $a\in A$, and $t<T$, for which the probability of the agent's observed path $\sigma=(s_0,s_1,\ldots,s_T)$ until time $T$ is maximal, given the agent's actions $\alpha=(a_0,\ldots,a_{T-1})$ and a known a priori bound on the rate of change of transition probabilities over time. 

Let us consider the path $\sigma=(s_0,s_1,\ldots,s_T)$. For ease of notation, we assume that $A=\{a\}$, i.e., that the TVMDP $\cM$ is a time-varying Markov chain; if $|A|>1$, we can separate the agent's paths into $|A|$ possibly disconnected paths, one for each action. Given the transition probability function $P$, the probability of the agent following the path $\sigma$ is 
\begin{equation*}
\Prob(\sigma)=\prod_{t=0}^{T-1}P(s_t,a,s_{t+1},t)\textrm{.}
\end{equation*}
As the true transition probability function is unknown, given the agent's path $\sigma$, we want to determine the values $\tilde{P}(s,a,s',t)$ with $s,s'\in S$ and $t<T$ (in future to be denoted by $\tilde{P}_{t<T}$) which are {\em most likely} to have produced such a path \citep{BalNev04}. In other words, we want to find the values of $\tilde{P}_{t<T}$ that maximize $\Prob(\sigma)$. Naturally, we identify $\tilde{P}_{t<T}$ with an element in $[0,1]^{|S|^2T}$. Without any restrictions on the choice of  $\tilde{P}_{t<T}$, such values are naturally given by $\tilde{P}(s_t,a,s_{t+1},t)=1$ for all $t<T$: the transition probability function that will generate the observed outcomes with the highest probability is the one that ensures that all the observed outcomes happen with probability $1$. However, in such a framework, all observations at times $t<T$ make no impact on the estimate of transition probabilities for time $t=T$, rendering any learning meaningless. Thus, we assume the knowledge of the maximal rate of change of transition probabilities, i.e., $\varepsilon_t\in[0,1]$ with $t\in\NN_0$, which satisfy \eqref{cons}. Such a change-conscious maximum likelihood estimation (CCMLE) problem is thus given by 
\begin{equation}
\label{ccmle}
\begin{array}{{>{\displaystyle}c}*2{>{\displaystyle}l}}
\max_{\tilde{P}_{t<T}} & \quad \prod_{t=0}^{T-1} \tilde{P}(s_t,a,s_{t+1},t)  \\
\textrm{s.t.} & \quad \tilde{P}(s,a,s',t)\geq 0 & \quad \textrm{for all } s,s'\in S\textrm{, } t<T\textrm{,} \\
& \quad \sum_{s'\in S} \tilde{P}(s,a,s',t)=1 & \quad \textrm{for all } s\in S\textrm{, } t<T\textrm{.} \\
& \quad |\tilde{P}(s,a,s',t+1)-\tilde{P}(s,a,s',t)|\leq\varepsilon_t & \quad \textrm{for all } s,s'\in S\textrm{, } t<T\textrm{,} \\
\end{array}
\end{equation}
where the decision variables are $\tilde{P}(s,a,s',t)$ for all $s,s'\in S$, $t<T$.

Noting that the product in the objective function of \eqref{ccmle} is nonnegative, and the logarithm function is monotonic, \eqref{ccmle} can be replaced by the constrained log-likelihood problem
\begin{equation}
\label{bigop2}
\begin{array}{{>{\displaystyle}c}*2{>{\displaystyle}l}}
\min_{\tilde{P}_{t<T}} & \quad -\sum_{t=0}^{T-1}\log \tilde{P}(s_t,a,s_{t+1},t)  \\
\textrm{s.t.} & \quad \tilde{P}(s,a,s',t)\geq 0 & \quad \textrm{ for all } s,s'\in S\textrm{, } t<T\textrm{,} \\ & \quad
\sum_{s'\in S} \tilde{P}(s,a,s',t)=1 & \quad \textrm{ for all } s\in S\textrm{, } t<T\textrm{,} \\
& \quad |\tilde{P}(s,a,s',t+1)-\tilde{P}(s,a,s',t)|\leq\varepsilon_t & \quad \textrm{ for all } s,s'\in S\textrm{, } t<T\textrm{,} \\
\end{array}
\end{equation}
with the understanding that $\log 0=-\infty$. 

The optimization problem in \eqref{bigop2} is a convex optimization problem with a linear set of constraints and $T|S|^2$ decision variables $\tilde{P}(s,a,s',t)$; for $|A|>1$, there would be $T|S|^2|A|$ decision variables. Alternatively, as discrete distributions $\tilde{P}(s,a,\cdot,t)$ for different $s$ are not coupled by any of the constraints, we can instead treat \eqref{bigop2} as $|S|$ problems with $T|S|$ decision variables. We will use this method---presented at the beginning of Appendix~A---to simplify computations and theoretical proofs.

The maximal value of the objective function in \eqref{bigop2} is not $+\infty$ because $\tilde{P}_{t<T}$ defined by $\tilde{P}(s,a,s',t)=1/|S|$ for all $s,s'\in S$ and $t<T$ is in the feasible set, and produces a real value for the objective function. Thus, by continuity, the objective function attains a minimum in the feasible set. Such a minimum may not be unique. In the remainder of the paper, we use $\tilde{P}^{T}:S\times A\times S\times\{0,1,\ldots,T-1\}\to [0,1]$ or $\tilde{P}^T_{t<T}$ to denote any CCMLE obtained from the observations until time $T$, i.e., immediately before taking action $a_T$.

The following result, with the proof in Appendix \ref{appx}, shows that the CCMLE directly generalizes the classical estimate from \eqref{lln} for the case of time-invariant transition probabilities.
\begin{proposition}
\label{gener}
Let $\varepsilon_t=0$ for all $t\in\NN_0$. Then, $\tilde{P}^T(s,a,s',*)=\#(s,a,s')/\#(s,a)$ for all $s,s'\in S$, $a\in A$, $T\in\NN_0$, where $\#(s,a,s')=|\{t\in\{0,\ldots,T-1\}~|~s_t=s, a_t=a, s_{t+1}=s'\}|$ and $\#(s,a)=|\{t\in\{0,\ldots,T-1\}~|~s_t=s, a_t=a\}|$.
\end{proposition}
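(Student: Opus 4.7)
The plan is to reduce the CCMLE problem under $\varepsilon_t = 0$ to the classical one-shot multinomial MLE, one state-action pair at a time. First I would observe that the constraint $|\tilde{P}(s,a,s',t+1) - \tilde{P}(s,a,s',t)| \le \varepsilon_t = 0$ forces every feasible $\tilde{P}_{t<T}$ to be independent of $t$, so we may simply write $p^{s,a}_{s'} := \tilde{P}(s,a,s',t)$ for all $t<T$. Next, as noted in the paragraph following \eqref{bigop2}, the objective and remaining (simplex) constraints decouple across different pairs $(s,a)$, so the overall optimization splits into separate problems, one for each $(s,a)$.

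Fix such a pair $(s,a)$ with $\#(s,a) > 0$, and write $n_{s'} := \#(s,a,s')$, $N := \#(s,a) = \sum_{s'} n_{s'}$. Only the time steps $t$ with $s_t = s$ and $a_t = a$ contribute to the part of the objective involving $p^{s,a}$, and each such contribution equals $-\log p^{s,a}_{s_{t+1}}$. Grouping by the value of $s_{t+1}$, the subproblem becomes
\begin{equation*}
\min_{p \in \Delta^{|S|-1}} \; -\sum_{s' \in S} n_{s'} \log p_{s'},
\end{equation*}
where $\Delta^{|S|-1}$ is the probability simplex, and we adopt the usual convention $0 \cdot \log 0 = 0$ so that coordinates with $n_{s'} = 0$ impose no constraint on $p_{s'}$.

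The key step is to show that this subproblem has the unique solution $p^*_{s'} = n_{s'}/N$. My preferred argument is via nonnegativity of the KL divergence: taking $q_{s'} := n_{s'}/N$, which itself lies in $\Delta^{|S|-1}$, for any feasible $p$ with $p_{s'} > 0$ whenever $n_{s'} > 0$ we have
\begin{equation*}
0 \;\le\; \sum_{s'} q_{s'} \log\!\frac{q_{s'}}{p_{s'}} \;=\; \frac{1}{N}\!\left(\sum_{s'} n_{s'} \log q_{s'} - \sum_{s'} n_{s'} \log p_{s'}\right),
\end{equation*}
so $-\sum_{s'} n_{s'} \log p_{s'} \ge -\sum_{s'} n_{s'} \log q_{s'}$, with equality iff $p = q$ (on the support of $q$). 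Points with $p_{s'} = 0$ for some $s'$ with $n_{s'} > 0$ give objective value $+\infty$ and are strictly suboptimal (the uniform distribution is feasible and has finite objective). Alternatively one could apply Lagrange multipliers to this convex program and solve the stationarity condition $n_{s'}/p_{s'} = \lambda$ directly.

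I do not anticipate a serious technical obstacle; the chief subtleties are merely bookkeeping. First, I would be explicit about the $0 \log 0 = 0$ convention so that $s'$ with $n_{s'} = 0$ (and hence $p^*_{s'} = 0$) cause no issue. Second, for state-action pairs $(s,a)$ with $\#(s,a) = 0$, the subproblem has a vacuous objective and the estimate $\#(s,a,s')/\#(s,a)$ is formally $0/0$; this is the conventional indeterminate case in the classical estimate as well, and the statement of Proposition~\ref{gener} is to be read under the standard convention that both sides are left unconstrained (or, equivalently, that we consider only $(s,a)$ with $\#(s,a) \ge 1$). With these conventions in place the proof is complete.
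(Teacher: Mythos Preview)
Your proposal is correct and follows essentially the same approach as the paper: reduce the $\varepsilon_t=0$ constraint to time-invariance, decouple across state-action pairs, and recognize the resulting subproblem as the classical multinomial MLE. The only difference is that the paper simply cites a reference for the multinomial MLE solution, whereas you supply an explicit KL-divergence argument and are more careful about the $0\log 0$ and $\#(s,a)=0$ edge cases.
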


As discussed in Section \ref{probdef}, due to the time-varying nature of transition probabilities, it is generally not possible to ensure that the solution to \eqref{bigop2}, or any other estimation method, indeed correctly estimates the transition probabilities of the TVMDP. Nonetheless, Proposition \ref{gener} shows that, if the transition probabilities are known to be time-invariant, the produced estimates will be asymptotically correct with probability $1$. We now generalize this claim to the case in which the transition probabilities are known to be eventually time-invariant (ETI). ETI systems appear naturally in settings where changes occur on short time scales between long periods of unchanged behavior, e.g., weather fronts \citep{GreBle98}. ETI TVMDPs are also a stochastic discretization of classical ETI control systems \citep{Fei89}.

\begin{theorem}
\label{firbig}
Assume that there exists $N\in\NN_0$ such that $\varepsilon_t=0$ for all $t\geq N$. Then, $$\lim_{\#(s,a)\to\infty}\tilde{P}^T(s,a,s',T-1)=P(s,a,s',T-1)$$ for all $s,s'\in S$, $a\in A$ with probability $1$.
\end{theorem}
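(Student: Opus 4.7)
The plan is to decouple the log-likelihood in \eqref{bigop2} into its contributions from time steps $t<N$ and $t\geq N$, and to show that the former is asymptotically negligible relative to the latter.

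First, I would fix a single pair $(s,a)$ and exploit the decoupling across $s$ noted after \eqref{bigop2}. Since $\varepsilon_t=0$ for $t\geq N$, every feasible estimate satisfies $\tilde P(s,a,s',t)=q_{s'}$ for all $t\geq N$ and some $q$ on the probability simplex. Writing $\#^{\geq N}(s,a,s')$ for the number of observed transitions $s\to s'$ at times $t\geq N$, and $\hat p_{s'}=\#^{\geq N}(s,a,s')/\#^{\geq N}(s,a)$, the part of the $(s,a)$-restricted objective from $t\geq N$ equals, up to an additive term independent of $q$, the quantity $\#^{\geq N}(s,a)\cdot D_{\mathrm{KL}}(\hat p\,\|\,q)$, which by Gibbs' inequality is strictly convex in $q$ with unique minimiser $q=\hat p$. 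The part from $t<N$ has at most $N$ terms of the form $-\log\tilde P(s,a,s_{t+1},t)$; minimising over the pre-change variables for fixed $q$ subject to the constraints of \eqref{bigop2} produces a function $\bar h_T(q)$ that is bounded uniformly in $T$ on any compact subset of the relative interior of the simplex.

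Next, since $\#(s,a)\to\infty$ and $\#^{<N}(s,a)\leq N$, we have $\#^{\geq N}(s,a)\to\infty$. A standard M-estimator argument then shows that any CCMLE minimiser $q^{(T)}$ of $\#^{\geq N}(s,a)\,D_{\mathrm{KL}}(\hat p\,\|\,q)+\bar h_T(q)$ satisfies $q^{(T)}-\hat p\to 0$. The subtle case is a state $s'$ with $P(s,a,s',T-1)=0$: for large $T$ one has $\hat p_{s'}=0$ almost surely, so $D_{\mathrm{KL}}(\hat p\,\|\,q)$ does not directly penalise a positive $q_{s'}$; there the simplex constraint $\sum_{s'}q^{(T)}_{s'}=1$, combined with the already-established convergence $q^{(T)}_{s''}\to P(s,a,s'',T-1)$ for all $s''$ in the support, forces $q^{(T)}_{s'}\to 0$. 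Finally, applying the law of large numbers exactly as in the proof of Proposition~\ref{gener} to the i.i.d.\ successors drawn from $P(s,a,\cdot,T-1)$ (the common value of $P(s,a,\cdot,t)$ for $t\geq N$) gives $\hat p_{s'}\to P(s,a,s',T-1)$ almost surely, and combining with the previous step yields $\tilde P^{T}(s,a,s',T-1)=q^{(T)}_{s'}\to P(s,a,s',T-1)$ almost surely.

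The main obstacle I anticipate is the perturbation step, specifically making rigorous the claim that the bounded pre-change term $\bar h_T$ cannot shift the minimiser of $\#^{\geq N}(s,a)\,D_{\mathrm{KL}}(\hat p\,\|\,q)$ by more than $o(1)$, since the cross-entropy's curvature blows up near the boundary of the simplex. A clean workaround is to fix $\delta>0$, use strict convexity of $D_{\mathrm{KL}}(\hat p\,\|\,\cdot)$ around $\hat p$ to obtain a lower bound $D_{\mathrm{KL}}(\hat p\,\|\,q)\geq c(\delta,\hat p)>0$ whenever $\|q-\hat p\|\geq\delta$ and $\hat p$ lies in a fixed compact subset of the relative interior of the simplex, and then use the uniform boundedness of $\bar h_T$ on that compact set to force $\|q^{(T)}-\hat p\|<\delta$ once $\#^{\geq N}(s,a)$ is large; handling boundary coordinates separately via the simplex constraint, as above, completes the argument.
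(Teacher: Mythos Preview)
Your plan is essentially the paper's own argument: decouple by $(s,a)$, split the objective into the at-most-$N$ pre-change summands and the post-change cross-entropy, normalise so the pre-change contribution is $O(1/\#^{\geq N}(s,a))$, and use uniqueness of the cross-entropy minimiser together with the law of large numbers. The paper phrases the comparison as ``optimal value of \eqref{eacht2} converges to optimal value of the limiting cross-entropy problem \eqref{rework}'' rather than ``$D_{\mathrm{KL}}$ plus bounded perturbation'', but the content is identical.

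The one place your outline is thinner than the paper is exactly the obstacle you flag. Your stated bound on $\bar h_T$ holds only on compact subsets of the relative interior, yet when $P(s,a,\cdot,*_{t\geq N})$ has a zero coordinate the empirical $\hat p$ lies on the boundary, so you cannot directly evaluate $\bar h_T(\hat p)$ there; and your workaround, as written, still presupposes $\hat p$ in a fixed interior compact. The paper closes this by taking $N$ minimal so that $\varepsilon_{N-1}>0$, and then observing that the pre-change variables can always be chosen with every coordinate at least some $\eta>0$ (independent of $q$) while remaining feasible. In your language this means $\bar h_T$ is in fact bounded on the \emph{entire} simplex almost surely, not just on interior compacta; once you insert that observation, the inequality $D_{\mathrm{KL}}(\hat p\,\|\,q^{(T)})\leq \bar h_T(\hat p)/\#^{\geq N}(s,a)\to 0$ goes through without any separate boundary-coordinate argument, and Pinsker's inequality gives $q^{(T)}\to\hat p$ in all coordinates at once.
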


The proof of Theorem \ref{firbig} is in Appendix \ref{appx}. Theorem \ref{firbig} states that, asymptotically, the CCMLE will disregard the possible changes in the transition probabilities that occur at the beginning of the system run, \emph{as long as the transition probabilities are known to be time-invariant after a finite time}. Such a property is shared with the estimate $\#(s,a,s')/\#(s,a)\approx P(s,a,s',t)$ which implicitly assumes that the transition probabilities are time-invariant from the start of the system run. 

The following theorem shows that, under the condition that $P(s,a,s',t)=1$ after some $t=N$, the CCMLE actually learns $P(s,a,s',t)$ correctly \emph{in finite time}, as opposed to asymptotically, and without requiring knowledge that $P(s,a,s',t)$ is eventually constant in $t$. We again invite the reader to see Appendix \ref{appx} for the proof.

\begin{theorem}
\label{secbig}
Let $N\in\NN_0$, $s,s'\in S$, and $a\in A$. Assume that $\varepsilon_t=\varepsilon\in(0,1]$ for all $t\in\NN_0$. Let $P(s,a,s',t)=1$ for all $t\geq N$. Then, $\tilde{P}^{T+1}(s,a,s',T)=1$ for all $T\geq N+1/\varepsilon$ such that $(s_T,a_T)=(s,a)$.
\end{theorem}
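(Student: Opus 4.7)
The plan is proof by contradiction: suppose some CCMLE $\tilde{P}^{T+1}$ has $\tilde{P}^{T+1}(s,a,s',T) = q < 1$, and I will construct a feasible $\tilde{P}'$ with strictly larger likelihood. Since the feasible set of the CCMLE decouples across state-action pairs (as noted in the paper just before Proposition~\ref{gener}), I leave $\tilde{P}'$ equal to $\tilde{P}^{T+1}$ outside the $(s,a)$-component and only modify the distributions $\tilde{P}(s,a,\cdot,t)$.

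The central observation is that because $P(s,a,s',t)=1$ for all $t\geq N$, every observation with $(s_t,a_t)=(s,a)$ and $t\in[N,T]$ has $s_{t+1}=s'$; in particular this applies at $t=T$ by hypothesis. So each such observation contributes $\log \tilde{P}(s,a,s',t)$ to the log-likelihood, and an estimate with $\tilde{P}(s,a,s',T)=q<1$ leaves room for improvement at $t=T$.

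Write $p_t=\tilde{P}^{T+1}(s,a,s',t)$ and $r^{s''}_t=\tilde{P}^{T+1}(s,a,s'',t)$ for $s''\neq s'$. Set $p'_t = \max\bigl(p_t,\, 1-(T-t)\varepsilon\bigr)$, which lies in $[0,1]$, agrees with $p_t$ whenever $t\leq T-1/\varepsilon$, equals $1$ at $t=T$, and satisfies the $\varepsilon$-rate bound by the elementary identity $|\max(a,b)-\max(c,d)|\leq\max(|a-c|,|b-d|)$ applied to the sequences $(p_t)$ and $(1-(T-t)\varepsilon)$. Let $\tau=\lfloor T-1/\varepsilon\rfloor$, so $\tau\geq N$ by the hypothesis $T\geq N+1/\varepsilon$. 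For $t\leq\tau$ take $\tilde{P}'(s,a,\cdot,t)=\tilde{P}^{T+1}(s,a,\cdot,t)$; assuming first that $p_\tau<1$, for $t\in(\tau,T]$ set
\[
\tilde{P}'(s,a,s',t)=p'_t, \qquad \tilde{P}'(s,a,s'',t) = r^{s''}_\tau \cdot \frac{1-p'_t}{1-p_\tau}\quad (s''\neq s').
\]
Normalization is automatic since $\sum_{s''}r^{s''}_\tau = 1-p_\tau$, non-negativity is clear, and the rate bound on the $s''$-coordinates inside $(\tau,T]$ uses $r^{s''}_\tau\leq 1-p_\tau$ together with $|p'_{t+1}-p'_t|\leq\varepsilon$. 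The boundary step $\tau\to\tau+1$ requires $|p_\tau-p'_{\tau+1}|\leq\varepsilon$, which holds in both subcases: if $p'_{\tau+1}=p_{\tau+1}$ it is just the original rate bound on $p$, and if $p'_{\tau+1}=1-(T-\tau-1)\varepsilon$ it follows from $p_\tau\leq p_{\tau+1}+\varepsilon$ combined with $(T-\tau-1)\varepsilon \in [1-\varepsilon,1)$. The edge case $p_\tau=1$ is handled by sliding the anchor $\tau$ forward to the largest time where $p$ still equals $1$ and rerunning the construction on what remains.

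Comparing objectives, contributions from state-action pairs other than $(s,a)$ and from times $t\leq\tau$ are unchanged by construction. For every $t\in(\tau,T]$ with $(s_t,a_t)=(s,a)$, we have $t>N$ and hence $s_{t+1}=s'$, so the contribution moves from $\log p_t$ to $\log p'_t\geq\log p_t$. At $t=T$ the contribution jumps from $\log q<0$ to $\log 1=0$, a strict improvement. Hence $\tilde{P}'$ has strictly larger log-likelihood than $\tilde{P}^{T+1}$, contradicting optimality and forcing $\tilde{P}^{T+1}(s,a,s',T)=1$. The main technical obstacle is enforcing all $|S|$ rate constraints simultaneously while driving $p_T$ up to $1$; scaling the off-$s'$ probabilities from the common anchor at $\tau$ (rather than perturbing them pointwise) is what prevents small coordinated perturbations from compounding into a rate violation at any step.
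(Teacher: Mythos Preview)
Your argument is correct and, like the paper's, proceeds by contradiction through construction of a strictly better feasible point; the organization is genuinely different, however. The paper argues \emph{inductively across visit times}: denoting by $T_0<T_1<\cdots$ the visits to $(s,a)$ from the last pre-$N$ visit onward, it proves the step inequality
\[
\tilde P^{T_i+1}(s,a,s',T_i)\ \ge\ \min\bigl(1,\ \tilde P^{T_{i-1}+1}(s,a,s',T_{i-1})+(T_i-T_{i-1})\varepsilon\bigr)
\]
by building an improved feasible point from the CCMLE at the \emph{previous} observation time, and then telescopes this inequality to force the estimate to $1$ once $T_i-T_0\ge 1/\varepsilon$. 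Your proof instead works entirely inside the single CCMLE $\tilde P^{T+1}$ and pushes $p_t$ up to $\max\bigl(p_t,1-(T-t)\varepsilon\bigr)$ in one shot. The mass redistribution is also handled differently: the paper subtracts the surplus from the remaining coordinates greedily in sequence (the $d^r$ mechanism), whereas you rescale all off-$s'$ coordinates proportionally from a common anchor at $\tau$. Your route is shorter and never compares CCMLEs computed at different observation times; the paper's route gives the intermediate monotonicity of the estimates across successive visits as a byproduct.

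One minor point: the edge case $p_\tau=1$ is not quite handled by ``sliding the anchor to the last time where $p=1$ and rerunning the construction,'' because at that new anchor the scaling divides by $1-p_{\tau'}=0$. The obvious repair works: if $\tau'\in[\tau,T)$ is the largest time with $p_{\tau'}=1$, simply set $\tilde P'(s,a,s',t)=1$ and all other coordinates to $0$ for every $t>\tau'$. Feasibility at the boundary is trivial since $p_{\tau'}=1$, all observed transitions at these times land at $s'$ because $\tau'\ge\tau\ge N$, and the improvement at $t=T$ is strict.
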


The slightly convoluted statement of Theorem \ref{secbig} stems from the nature of the optimization problem \eqref{bigop2}.
Namely, if $(s,a)$ is \textit{not} visited at time $T$, any probability distribution $\tilde{P}^{T+1}(s,a,\cdot,T)$ which satisfies the constraints in \eqref{bigop2} can be chosen without any impact on the objective function in \eqref{bigop2}. Such a possibility reflects the agent's lack of knowledge about the drift of transition probabilities at $(s,a)$ since the last time that the agent obtained any information about them.

Theorem \ref{secbig} proves that the CCMLE holds a significant advantage over the estimate given by $\#(s,a,s')/\#(s,a)\approx P(s,a,s',t)$ in the case where a transition probability changes over time and ultimately becomes $1$. Although $\#(s,a,s')/\#(s,a)$ will converge to $1$ as $\#(s,a)\to\infty$, such convergence will be slow: if $(s,a)$ has been visited $v$ times prior to $P(s,a,s',t)$ becoming constantly $1$, it is simple to see that it can take up to $v(1-\eta)/\eta$ additional visits to $(s,a)$ for the estimate of $P(s,a,s',T)$ to have an error no larger than $\eta$, and the error may never become $0$. On the other hand, after a {\em single visit} to $(s,a)$ at time no earlier than $1/\varepsilon$ after $P(s,a,s',t)$ becomes constantly $1$, the estimating procedure \eqref{bigop2} is guaranteed to produce the correct transition probability.

It is possible to modify the classical estimate $\#(s,a,s')/\#(s,a)\approx P(s,a,s',t)$, also used by \citet{Ortetal20}, in order to satisfy Theorem \ref{secbig}: we can simply make the agent ``forgetful''  and calculate the estimate in a sliding window fashion, i.e., based on the outcomes of actions performed in the last $1/\varepsilon$ time steps \citep{Huaetal10,Gajetal18}. In that case, assuming that the transition probabilities satisfy the very specific condition of Theorem \ref{secbig}, the estimate produced in such a way would satisfy the claim of Theorem \ref{secbig}. However, the choice of $1/\varepsilon$ would be arbitrary; an analogous claim to that of Theorem \ref{secbig} would hold for an estimate with any finite memory length, and it is possible that an estimate with a longer or shorter memory would provide better results in general.

The same notion of forgetfulness gives rise to an attractive heuristic to reduce the complexity of computing the CCMLE. Instead of solving an optimization problem with up to $T|S|$ variables at every time step---thus, a problem that grows without bound in size as the system run progresses---we can choose to ``forget'' the variables, i.e., transition probabilities, that are far enough before the current time. \textit{Under very particular conditions}, Theorem \ref{secbig} guarantees that, if we choose to exclude all variables that correspond to time steps that occurred more than $1/\varepsilon$ steps ago, such a \emph{forgetful CCMLE} of $P^T(s,a,s',T-1)$ will not differ from the original CCMLE. In general, forgetfulness is not without effect and we have no reason to believe that the error of the estimates produced by a forgetful CCMLE will be smaller than the one produced by a CCMLE without forgetting. However, the simulations in Section \ref{sim} will show that the difference between a CCMLE and a forgetful CCMLE may be small, while the CCMLE requires significantly more computational power. In particular, for each action the number of decision variables in the forgetful CCMLE is $|S|/\varepsilon$---a value independent of $T$---while in the CCMLE it is $T|S|$.

Having described the CCMLE method for estimating the time-varying transition probabilities, we now continue to the second step in our solution of the optimal learning problem: quantifying how unsure we are about transition probabilities given the agent path.

\section{Uncertainty in Estimation}
\label{unc}

The definition of uncertainty proposed in this paper arises out of two previously identified intuitive causes for uncertainty of an optimal estimate \citep{And04}. Namely, (i) if a single additional observation makes a large difference in an estimate and (ii) if there exist multiple estimates that produce the observed data with the same likelihood. In case (i), while an estimate might be unique, its instability indicates that it is not credible \citep{And04}, while in case (ii) it is uncertain which of the produced estimates, if any, is the correct one. In order to define uncertainty, we first provide a simple description of the set of all solutions to \eqref{bigop2}.

\begin{lemma}
\label{lemunc}
Let the state-action pair $(s,a)$ be visited at times $0\leq T_0<T_1<\ldots<T_k<T$. Then, $\tilde{P}^T(s,a,s_{T_i+1},T_i)$ are uniquely defined.
\end{lemma}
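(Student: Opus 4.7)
The plan is to exploit two structural features of problem \eqref{bigop2}: the decoupling across state–action pairs noted just after \eqref{bigop2}, and the fact that the log-likelihood objective involves only finitely many coordinates of $\tilde{P}^T$. Fixing the pair $(s,a)$, the only summands in the objective that depend on the variables $\tilde{P}(s,a,\cdot,\cdot)$ are those at the visit times $T_0,\ldots,T_k$, so after writing $x_i := \tilde{P}(s,a,s_{T_i+1},T_i)$ the objective of the $(s,a)$-subproblem becomes $-\sum_{i=0}^{k}\log x_i$. All remaining variables $\tilde{P}(s,a,s',t)$ influence the subproblem only indirectly, through the sum-to-one and drift constraints.

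Next I would observe that at any minimizer each $x_i$ must be strictly positive. The uniform candidate $\tilde{P}(s,a,s',t)=1/|S|$ is feasible and yields the finite value $(k+1)\log|S|$ in the subproblem, so no minimizer can set any observed transition probability to $0$. Consequently every optimizer lies in the open region where $-\log$ is strictly convex.

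The core step is then a standard strict-convexity argument. Suppose two minimizers of \eqref{bigop2} induced distinct vectors $x\neq x'$. Since the full feasible set of \eqref{bigop2} is convex (all constraints being affine), the midpoint of the two full solutions is also feasible, and the linear map $\tilde{P}\mapsto x$ sends this midpoint to $\tfrac{1}{2}(x+x')$. Strict convexity of $-\log$ on $(0,\infty)$ then yields
$$-\sum_{i=0}^{k}\log\!\left(\tfrac{x_i+x_i'}{2}\right) \;<\; -\tfrac{1}{2}\sum_{i=0}^{k}\log x_i \;-\;\tfrac{1}{2}\sum_{i=0}^{k}\log x_i',$$
contradicting the optimality of both original solutions. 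Hence $x=x'$ across all minimizers, which is precisely the claim of the lemma.

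The main subtlety is conceptual rather than computational: although $\tilde{P}^T$ itself is generally non-unique, as coordinates not pinned down by the objective retain slack through the drift and simplex constraints, the specific coordinates that do enter the objective are pinned down exactly by strict convexity. Once one reduces to the $(s,a)$-subproblem and identifies the objective's dependence on only $(x_0,\ldots,x_k)$, nothing beyond the convex-combination inequality is required.
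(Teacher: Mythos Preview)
Your proof is correct and follows essentially the same approach as the paper: decouple to the $(s,a)$-subproblem, observe that the objective depends only on the coordinates $x_i=\tilde{P}^T(s,a,s_{T_i+1},T_i)$, and use strict convexity of $-\log$ together with convexity of the feasible set to force uniqueness of these coordinates. The only cosmetic difference is that the paper parametrizes the segment between two putative optima by $\lambda\in[0,1]$ and differentiates twice to conclude equality, whereas you apply the midpoint inequality directly; your version is slightly cleaner and also makes explicit the positivity of each $x_i$ at an optimum, which the paper leaves implicit.
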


As with nearly all theoretical proofs, the proof of Lemma \ref{lemunc} is in Appendix \ref{appx}. We do provide the proof of the following result within this section, as it provides a method for computation of all CCMLEs.

\begin{theorem}
\label{thmunc}
The set of solutions to \eqref{bigop2} is a polytope.
\end{theorem}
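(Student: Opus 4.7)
The plan is to combine three observations: the feasible region of \eqref{bigop2} is itself a polytope, the objective depends on only a specific subset of the decision variables, and Lemma~\ref{lemunc} pins those variables down at every CCMLE. Once these are in place, the set of optima is cut out of the feasible polytope by finitely many linear equalities and is automatically a polytope.

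First I would check that the constraints in \eqref{bigop2} define a polytope. All of them are linear: the nonnegativity $\tilde{P}(s,a,s',t)\geq 0$, the simplex conditions $\sum_{s'\in S}\tilde{P}(s,a,s',t)=1$, and the drift bounds $|\tilde{P}(s,a,s',t+1)-\tilde{P}(s,a,s',t)|\leq\varepsilon_t$, which unpack into a pair of linear inequalities. The simplex equalities and nonnegativity force every coordinate into $[0,1]$, so the feasible polyhedron is bounded and hence a polytope; call it $\cF$.

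Next I would observe that the objective $-\sum_{t=0}^{T-1}\log\tilde{P}(s_t,a,s_{t+1},t)$ depends only on the $T$ decision variables indexed by the observed transitions $\{(s_t,a,s_{t+1},t):t=0,\ldots,T-1\}$; the remaining variables do not appear in the objective at all. By Lemma~\ref{lemunc}, each of these observed-transition coordinates takes a unique value at every CCMLE; call it $p_t^*$. In particular, the common optimal value of the objective is $-\sum_{t=0}^{T-1}\log p_t^*$.

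Finally, I would identify the set of CCMLEs with
\[
\cF \;\cap\; \bigcap_{t=0}^{T-1}\bigl\{\tilde{P}_{t<T}\;:\;\tilde{P}(s_t,a,s_{t+1},t)=p_t^*\bigr\}.
\]
The inclusion $\subseteq$ is exactly Lemma~\ref{lemunc}. For $\supseteq$, any feasible assignment satisfying these $T$ equalities has objective value $-\sum_{t}\log p_t^*$, which matches the optimum and therefore makes it a CCMLE. The right-hand side is $\cF$ intersected with $T$ hyperplanes, hence a polytope. The entire argument rests on Lemma~\ref{lemunc}; granting that lemma, the main obstacle disappears and the theorem reduces to the bookkeeping observation that the objective depends on only finitely many coordinates, all of which are already pinned down.
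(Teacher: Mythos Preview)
Your proposal is correct and follows essentially the same approach as the paper: invoke Lemma~\ref{lemunc} to fix the observed-transition coordinates, note that the remaining coordinates do not appear in the objective, and conclude that the optimal set is the feasible polytope intersected with finitely many hyperplanes. Your write-up is in fact slightly more explicit than the paper's, as you spell out the boundedness of $\cF$ and argue both inclusions in the identification of the optimal set.
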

\begin{proof}
By Lemma \ref{lemunc}, the estimates of the transition probabilities $\tilde{P}^T(s_t,a_t,s_{t+1},t)$ obtained from \eqref{bigop2} are uniquely defined for all $t\in\{0,\ldots,T-1\}$. Since the other values in $\tilde{P}^T_{t<T}$ do not feature at all in the objective function, the set of solutions to \eqref{bigop2} is given by any $\tilde{P}^T_{t<T}$ where $\tilde{P}^T(s_t,a_t,s_{t+1},t)$ are the uniquely defined optimal solutions, and all other $\tilde{P}^T(s,a,s',t)$ satisfy the constraints in \eqref{bigop2}. All of those constraints are affine, i.e., the set of solutions to \eqref{bigop2} is a bounded intersection of finitely many halfspaces. Hence, it is a polytope.
\end{proof}

By Lemma \ref{lemunc} and the proof of Theorem \ref{thmunc}, in order to compute the set of all CCMLEs $\cP^T$, we first determine a single solution $\tilde{P}^{T*}_{t<T}$ by solving a convex optimization problem. Then, the set $\cP^T$ of all other solutions $\tilde{P}^T_{t<T}$ is a polytope in $\RR^{|S|^2|A|T}$ given by constraints
\begin{equation}
\label{poly}
\begin{aligned}
\tilde{P}^T(s_t,a_t,s_{t+1},t)=\tilde{P}^{T*}(s_t,a_t,s_{t+1},t) & \quad \textrm{ for all } t<T\textrm{,}  \\
\tilde{P}^T(s,a,s',t)\geq 0 & \quad \textrm{ for all } s,s'\in S\textrm{, } a\in A\textrm{, } t<T\textrm{,} \\
\sum_{s'\in S}\tilde{P}^T(s,a,s',t)=1 & \quad \textrm{ for all } s\in S\textrm{, } a\in A\textrm{, } t<T\textrm{,} \\
|\tilde{P}^T(s,a,s',t)-\tilde{P}(s,a,s',t+1)|\leq\varepsilon_t & \quad \textrm{ for all } s,s'\in S\textrm{, } a\in A\textrm{, } t<T\textrm{.}
\end{aligned}
\end{equation}

We are now ready to define the uncertainty in the estimate $\tilde{P}^T(\cdot,\cdot,\cdot,t)$.

\begin{definition}
\label{dunc}
For all $s\in S$, $a\in A$, $t\leq T$, let $\cP^T_{\sigma,\alpha}(s,a,t)\subseteq\RR^{|S|}$ be the polytope where each point is a probability distribution $\tilde{P}^T(s,a,\cdot,t)$ obtained from the set of all CCMLEs based on the agent's previous trajectory $\sigma=(s_0,\ldots,s_T)$ and actions $\alpha=(a_0,\ldots,a_{T-1})$. The \emph{uncertainty of estimates} $\tilde{P}^T(s,a,\cdot,t)$ under $(\sigma,\alpha)$, denoted by $U^t_{\sigma,\alpha}(s,a)$, is defined by
\begin{equation}
\label{defunc}
U^t_{\sigma,\alpha}(s,a)=\max\left(\max_{s'\in S}\max_{\substack{x\in \cP^T_{\sigma,\alpha}(s,a,t) \\ y\in \cP^{T+1}_{\overline{\sigma}_{s'},\overline{\alpha}}(s,a,t)}}d(x,y),\mathrm{diam}\left(\cP^T_{\sigma,\alpha}(s,a,t)\right) \right)\textrm{,}
\end{equation}
where $\overline{\sigma}_{s'}$ and $\overline{\alpha}$ denote a trajectory and set of actions equal to $\sigma$ and $\alpha$, with an additional transition $(s,a,s')$ observed at time $T$ and an action $a$ performed at the same time.
\end{definition}

When $t=T$, in Definition \ref{dunc} we make use of estimates $\tilde{P}^T(\cdot,\cdot,\cdot,T)$. While \eqref{bigop2} only considered $\tilde{P}^T_{t<T}$, we can introduce additional variables $\tilde{P}^T(\cdot,\cdot,\cdot,T)$ which still need to satisfy the constraints of \eqref{bigop2}. A CCMLE produced by \eqref{bigop2} is then certainly not unique, as $\tilde{P}^T(\cdot,\cdot,\cdot,T)$ can be freely chosen, as long as they respect these constraints. This lack of uniqueness is intuitive: it represents the agent's lack of certainty about the current transition probabilities, even if it has all the possible knowledge about transition probabilities at the previous times. We also note that $\overline{\sigma}$ is not necessarily a legitimate path for an agent, as the agent's state $s_T$ at time $T$ in $\sigma$ does not necessarily equal the starting state for the transition $(s,a,s')$ observed at time $T$. Nonetheless, a CCMLE can be equally produced using  \eqref{bigop2}, with the objective function $$-\sum_{t=0}^{T-1}\log\tilde{P}^{T+1}(s_t,a_t,s_{t+1},t)-\log\tilde{P}^{T+1}(s,a,s',T)\textrm{.}$$

An intuitive explanation of formula \eqref{defunc}, corresponding to the description of uncertainty at the beginning of this section, is as follows. The first term in the $\max$ represents the sensitivity of the CCMLE to a new observation: we are more certain in our knowledge of the transition probabilities if a single ``outlier'' observation cannot significantly change the estimate. The second term represents the distance between two equally likely transition probabilities. If there are two very distant probability transition functions in the polytope $\cP^T_{\sigma,\alpha}(s,a,t)$, this term will be large. We note that $U^t_{\sigma,\alpha}(s,a)\leq \sqrt{2}$, as all probability distributions $\tilde{P}^T(s,a,\cdot,t)$ necessarily belong to the probability simplex, which has a diameter of $\sqrt{2}$.

\begin{remark}
As all sets $\cP^T_{\sigma,\alpha}(s,a,t)$ are polytopes, the latter term in \eqref{defunc} is the maximal distance between the vertices of $\cP^T_{\sigma,\alpha}(s,a,t)$. The first term is the maximum of distances between any point of $\mathcal{P}^T_{\sigma, \alpha}(s,a,t)$ and any point in $\cup_{s'\in S}\mathcal{P}^{T+1}_{\overline{\sigma}_{s'},\overline{\alpha}}(s,a,t)$. The maximum distance between points in any two polytopes is, by convexity, achieved when both of those points are vertices of their corresponding polytopes. Thus, computing the first term also reduces to comparing all distances between vertices of $\mathcal{P}^T_{\sigma, \alpha}(s,a,t)$ and vertices of $|S|$ polytopes $\mathcal{P}^{T+1}_{\overline{\sigma}_{s'},\overline{\alpha}}(s,a,t)$. Hence, $U^t_{\sigma,\alpha}(s,a)$ can be computed by determining vertices of $1+|S|$ polytopes and their pairwise distances.
\end{remark}

The notion of uncertainty emulates the role of functions $1/(1+\#(s,a))$ and $1/\sqrt{\#(s,a)}$ in the setting of time-invariant MDPs, where $\#(s,a)$ denotes the number of times a pair $(s,a)$ has been visited until the current time. In the works of \citet{KolNg09} and \citet{StrLit08}, respectively, those functions---multiplied by a tuning parameter $\beta$---are used to determine which transition probabilities $P(s,a,\cdot,*)$ are not yet known and should be visited. The intuition behind these functions relies on the law of large numbers: as previously discussed, as $\#(s,a)\to\infty$, the estimate $\#(s,a,s')/\#(s,a)$ converges to $P(s,a,s',*)$ with probability $1$, while both of the above functions converge to $0$. Theorem~\ref{next}, proved in Appendix \ref{appx}, shows that $U$ defined in \eqref{defunc} satisfies the same property, and in particular relates $U$ to the function $\beta/(1+\#(s,a))$ used by \citet{KolNg09}.

\begin{theorem}
\label{next}
Assume that $\varepsilon_t=0$ for all $t\in\NN_0$. Let $T\in\NN$. Additionally, let $\sigma$ be the agent's path until time $T$, and let $\alpha$ be the actions that the agent takes until time $T-1$. Then, for all $t\leq T$, $s\in S$, and $a\in A$, $$\frac{\sqrt{1-1/|S|}}{1+\#(s,a)}\leq U^t_{\sigma,\alpha}(s,a)\leq\frac{\sqrt{2}}{1+\#(s,a)}\textrm{,}$$ where $\#(s,a)$ denotes the number of times that $(s,a)$ has been visited until time $T-1$.
\end{theorem}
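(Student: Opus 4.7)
The plan is to exploit the fact that when $\varepsilon_t=0$ the rate-of-change constraints in \eqref{poly} force $\tilde{P}^T(s,a,s',t)$ to be independent of $t$, so each polytope $\cP^T_{\sigma,\alpha}(s,a,t)$ is the same set $\cP^T_{\sigma,\alpha}(s,a)$ for all $t$, and Proposition~\ref{gener} pins it down explicitly whenever the pair has been visited. I would then split on whether $n:=\#(s,a)\geq 1$ or $n=0$.

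In the visited case, Proposition~\ref{gener} forces $\cP^T_{\sigma,\alpha}(s,a) = \{p\}$, where $p_i = n_i/n$ and $n_i := \#(s,a,s^i)$, so the diameter term in \eqref{defunc} vanishes and $U^t_{\sigma,\alpha}(s,a) = \max_{i^*\in\{1,\ldots,|S|\}} d(p,p'_{i^*})$, where $p'_{i^*}$ is the updated singleton produced after appending the hypothetical observation $(s,a,s^{i^*})$. A direct computation separating the $i=i^*$ and $i\neq i^*$ coordinates gives
$$d(p,p'_{i^*})^2 \;=\; \frac{\sum_i n_i^2 \;+\; n^2 \;-\; 2n\,n_{i^*}}{n^2(n+1)^2}.$$
The upper bound $\sqrt{2}/(n+1)$ then follows from $n_{i^*}\geq 0$ and $\sum_i n_i^2 \leq (\sum_i n_i)^2 = n^2$. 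For the lower bound, rather than guessing the optimizing $i^*$, I would average the squared distance over $i^*\in\{1,\ldots,|S|\}$ and invoke the Cauchy--Schwarz inequality in the form $\sum_i n_i^2 \geq n^2/|S|$:
$$\frac{1}{|S|}\sum_{i^*} d(p,p'_{i^*})^2 \;=\; \frac{\sum_i n_i^2 + n^2\bigl(1-2/|S|\bigr)}{n^2(n+1)^2} \;\geq\; \frac{1-1/|S|}{(n+1)^2},$$
and since the maximum is at least the mean, the lower bound $\sqrt{1-1/|S|}/(n+1)$ follows. In the unvisited case $n=0$, the polytope $\cP^T_{\sigma,\alpha}(s,a)$ is the entire probability simplex and its diameter $\sqrt{2}$ (attained between any two vertices $e_i,e_j$) already equals $\sqrt{2}/(n+1)$, so both bounds hold immediately.

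The main obstacle is the lower bound in the visited case: pinning down a single $i^*$ that realizes the bound directly would require case analysis on the multiset $(n_i)$, so the averaging-plus-Cauchy--Schwarz trick is the cleanest route to a distribution-free lower bound. Minor care is also needed for the degenerate corners $|S|=1$ (where $\sqrt{1-1/|S|}=0$ and the lower bound is trivial) and $n=0$ (where the upper bound is saturated by the diameter term rather than by the ``sensitivity'' term).
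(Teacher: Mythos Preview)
Your proof is correct and tracks the paper's argument closely: both reduce to singletons via Proposition~\ref{gener} when $\#(s,a)\geq 1$, compute the same distance expression, handle $\#(s,a)=0$ via the simplex diameter, and obtain the upper bound from $\sum_i n_i^2\leq n^2$ and $n_{i^*}\geq 0$.

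The only substantive difference is the lower-bound step. The paper applies the power mean inequality to the partial sum $\sum_{i\neq i^*} n_i^2 \geq (\sum_{i\neq i^*} n_i)^2/(|S|-1)=(n-n_{i^*})^2/(|S|-1)$, obtaining
\[
d(p,p'_{i^*})\;\geq\;\frac{(n-n_{i^*})\sqrt{|S|}}{n(n+1)\sqrt{|S|-1}},
\]
and then invokes pigeonhole to exhibit an $i^*$ with $n_{i^*}\leq n/|S|$. You instead average $d(p,p'_{i^*})^2$ over all $i^*$, apply Cauchy--Schwarz to the full sum $\sum_i n_i^2\geq n^2/|S|$, and conclude by $\max\geq$ mean. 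Both routes deliver the same constant $\sqrt{1-1/|S|}$; your averaging argument is slightly cleaner in that it sidesteps the explicit choice of $i^*$, while the paper's approach has the minor advantage of identifying a concrete $i^*$ that (nearly) realizes the bound.
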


\begin{remark}
From the proof of Theorem \ref{firbig}, it also directly follows that $U^T_{\sigma,\alpha}(s,a)\to 0$ as $\#(s,a)\to\infty$ for the case of transition probabilities that are known to be ETI. Namely, the proof of Theorem \ref{firbig} shows that, as $\#(s,a)\to\infty$ and thus $T\to\infty$, the set of possible values of $\tilde{P}^T(s,a,s',*_{t\geq N})$ obtained from \eqref{bigop2} at time $T$ is contained in an interval around $P(s,a,s,*_{t\geq N})$ with length converging to $0$. Hence, both terms in \eqref{defunc} converge to $0$. Naturally, the theoretical bounds would be harder to obtain than in Theorem \ref{next} because of the initial steps where the transition probabilities are time-varying.
\end{remark}

We now proceed to the final step of developing an optimal learning and control policy for an agent in a TVMDP: using the uncertainties of state-action pairs to determine which action to perform.

\section{Control Policy Design}
\label{pols}

In designing the optimal control policy, we follow the ``exploration and exploitation'' framework used by previous work \citep{KolNg09,BraTen02,Ornetal18,StrLit08}. In other words, instead of always pursuing actions estimated to be the most useful to satisfying its control objectives (``exploitation''), the agent may take into account the value of a particular action to the learning process, with the goal of minimizing the error in the estimated transition probabilities (``exploration''), hence again helping to achieve the control objectives. We begin by considering an optimal exploration strategy, and then continue to mixing exploration and exploitation.

\subsection{Optimal Learning Policy}
\label{learnp}

We begin by considering solely the learning objective. We interpret the agent's goal as minimization of the total uncertainty $\mathbb{U}(t)=\sum_{(s,a)}U^t_{\sigma,\alpha}(s,a)$ about the environment, where $\sigma$ and $\alpha$ are the sequences of agent's states and actions, respectively, until the beginning of time step $t$. That is, we wish to determine a policy $\pi$ to minimize $\sum_t \mathbb{U}(t)$. Such a sum may be finite, discounted-infinite, or, alternatively, we may only be interested in retaining $\mathbb{U}(t)$ to be as low as possible as $t\to\infty$. 

Analogously to the requirement that arises when attempting to construct an optimal policy in an unknown time-invariant MDP, determining an optimal policy in a TVMDP requires knowledge of the time-varying evolution of $\mathbb{U}$. This evolution depends on the transition probabilities in the underlying TVMDP, which we do not know a priori and are changing over time. 

Faced with the lack of knowledge about the future uncertainties, we propose the following time-varying policy. At time $T$, given a CCMLE of current and future transition probabilities, construct a time-invariant policy \begin{equation}
\label{argpi}
\psi_T^*(s)=\argmin_\psi\mathbb{E}\left[\sum_{t\geq 0}\mathbb{U}_\psi(T+t)\right]\textrm{,}
\end{equation} where $\mathbb{U}_\psi$ denotes the uncertainty if the agent follows a time-invariant policy $\psi$ starting at $s$, and the expectation is taken with the assumption that the current CCMLE of $P$ is correct. Then, define $\pi_T=\psi_T^*(s_T)$. 

Naturally, there are no guarantees that the proposed policy $\pi$, consisting of $\pi_T$'s generated at each step, will indeed minimize the total uncertainty about the system. Policy $\pi$ represents a heuristic attempt to always follow whichever action seems to be optimal at reducing the future uncertainty. We note that at every time $T$ it depends on the current CCMLE of transition probabilities, including probabilities $P(\cdot,\cdot,\cdot,T+t)$ for $t\geq 0$. As discussed before, such a CCMLE may not be unique. We also note that we are being non-committal about the horizon length (or possible discounts) for the sum in \eqref{argpi}. Its choice depends on the horizon of our learning objective; we reserve further discussion for the subsequent section.

\subsection{Optimal Control Policy}
\label{contrp}

We now amend the above discussion about optimal active learning by considering an agent that desires to maximize the collected state-action-based rewards. This framework is the setting of \citet{BraTen02}, \citet{KeaSin02}, and \citet{KolNg09}, which deal with the combined exploration and exploitation objectives---learning to improve the accuracy of the estimated transition probabilities (and thus lead to better planning in the future) and attempting to collect rewards using the current estimates---by adding a \textit{learning bonus} to the agent's collected reward. In other words, instead of using the policy \begin{equation*}
\pi^*_T=\argmax_\pi\mathbb{E}\left[\sum_{t\geq 0} R(s_{T+t},\pi_{T+t})\right]\textrm{,}
\end{equation*}
the agent uses the policy \begin{equation*}
\pi^*_T=\argmax_\pi\mathbb{E}\left[\sum_{t\geq 0} R(s_{T+t},\pi_{T+t})+f(s_{T+t},\pi_{T+t},{T+t})\right]\textrm{,}
\end{equation*}
where $f(t)$ relates to the ``amount of information'' that the agent will collect by visiting $(s_t,\pi_t)$ at time $t$. As mentioned, \citet{KolNg09} define $f(s,a,t)=\beta/(\#(s,a)+1)$, where $\#(s,a)$ denotes the number of times $(s,a)$ has been visited before time $t$. The results of \citet{KolNg09} show that, for time-invariant MDPs, such a bonus will, with high probability, lead to eventual learning of an almost-optimal policy in the Bayesian sense, as defined by \citet{KolNg09}. Our framework does not allow for such a result, as there is a constant need for learning due to the change in transition probabilities. Nonetheless, we adapt the approach of \citet{BraTen02}, \citet{KeaSin02}, and \citet{KolNg09} and define an optimal control policy to be a policy \begin{equation}
\label{defopt}
\pi^*_T=\argmax_\pi\mathbb{E}\left[\sum_{t\geq 0} R(s_{T+t},\pi_{T+t})+\beta U_{T+t}(s_{T+t},\pi_{T+t})\right]\textrm{,}
\end{equation}
where $\beta\geq 0$ and $U_t$ denotes the uncertainty of the agent about the estimates of current transition probabilities associated with the particular state-action pair, given the agent's motion and actions until time $t$. As shown in Theorem \ref{next}, in the case of time-invariant MDPs, the bonus defined in \eqref{defopt} is indeed similar to the form of the bonus of \citet{KolNg09}. We again note that the expectation in \eqref{defopt} is computed using the CCMLE at time $T$. Thus, as in the previous section, the agent needs to recompute and reapply the proposed policy during the system run in order to make use of its learning.

Naturally, policy proposed in \eqref{defopt} depends on the length of the horizon that is considered; \citet{KolNg09} provide a theoretical discussion of an optimal horizon length for the policy used in that work. As computing predictions of the future uncertainties is computationally difficult, solving \eqref{defopt} comes with a heavy computational burden for long horizons. Predictions of future uncertainties also become increasingly unreliable with the horizon length, so $\mathbb{E}[U_t(s_t,\pi_t)]$ may not be meaningful for large $t$. For this reason, in the work of Section \ref{sim} we concentrate on computing policy \eqref{defopt} with (i) horizon $1$ or (ii) $\beta=0$, the latter of which does not promote active learning, but nonetheless enables the agent to improve its estimates of transition probabilities by observing new transitions.

\section{Simulations}
\label{sim}

In this section, we illustrate the proposed CCMLE method on several numerical examples. The first example is within the classical gridworld-based patrolling (or pickup-delivery) domain \citep{Sanetal04,Cheetal12,Toretal18}, where probabilities of the agent moving in a particular direction when using a given action change over time. The second example is that of a two-state MDP with periodically changing transition probabilities. In addition to learning transition probabilities, we consider a reward maximization objective. The third example is a multi-armed bandit problem, where the rewards on the arms periodically vary over time. Finally, the fourth example considers a wind flow estimation scenario, and illustrates the use of CCMLE when the environment changes in a partly random fashion, and bound $\varepsilon$ is poorly chosen. Table \ref{tab1} lists all the performed experiments.
\begin{table}[t]
    \centering
    \begin{tabular}{|p{15mm}|p{40mm}|p{15mm}|p{25mm}|p{35mm}|}
    \hline
     \textbf{Section} & \textbf{Change in transition probabilities } & \textbf{Task} & \textbf{Bound \newline correctness} & \textbf{Method} \\
     \hline
     \ref{ex1l} & Constant, then \newline time-invariant & Learning & Correct & CCMLE \\
     \hline
     \ref{ex1l} & Constant, then \newline time-invariant & Learning & Incorrect & CCMLE \\  
     \hline
    \ref{ex1l} & Extreme change in one time step, then \newline time-invariant & Learning & Incorrect & CCMLE \\  
    \hline
    \ref{ex1p} & Constant, then \newline time-invariant & Planning & Correct & CCMLE, no active learning in planning \\  
    \hline
    \ref{ex2l} & Periodic & Learning & Correct & CCMLE \\
    \hline
    \ref{ex2l} & Periodic & Learning & Correct & Forgetful CCMLE \\
    \hline
    \ref{ex2p} & Periodic & Planning & Correct & CCMLE, no active learning in planning \\  
    \hline    
    \ref{ex2p} & Periodic & Planning & Correct & CCMLE \newline + active learning \\  
    \hline
    \ref{mab} & Periodic & Planning & Correct, \newline but loose & Forgetful CCMLE \newline + active learning \\  
    \hline
    \ref{wfe} & Random & Learning & Correct & CCMLE \\  
    \hline
    \ref{wfe} & Random & Learning & Incorrect & CCMLE \\  
    \hline
    \end{tabular}
    \caption{Overview of numerical examples of Section \ref{sim}.}
    \label{tab1}
\end{table}

\subsection{Patrol with ETI Dynamics}
\label{sim1}

The scenario we are simulating is as follows: an agent is moving on an $n\times n$ grid, starting in one of the grid corners. A $5\times 5$ illustration is shown in Figure~\ref{grid}. At every time step, if the agent is at a non-edge tile in the grid, it is known that it will move north, east, south, west, or stay in place. Upon hitting a grid edge (``wall''), the agent is known to automatically ``bounce back'', i.e., move to the nearest non-edge tile in the subsequent step.

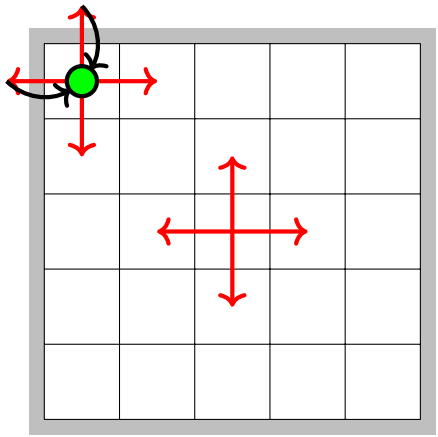
\begin{figure}[t]
\centering
\begin{tikzpicture}[scale=1]
\fill[gray!50!white] (-0.2,-0.2) rectangle (0,5);
\fill[gray!50!white] (-0.2,-0.2) rectangle (5,0);
\fill[gray!50!white] (-0.2,5) rectangle (5.2,5.2);
\fill[gray!50!white] (5,-0.2) rectangle (5.2,5.2);
\draw[step=1cm] (0,0) grid (5,5);
\draw[ultra thick, red, ->]  (2.5,2.5)--(2.5,3.5);
\draw[ultra thick, red, ->]  (2.5,2.5)--(2.5,1.5);
\draw[ultra thick, red, ->]  (2.5,2.5)--(1.5,2.5);
\draw[ultra thick, red, ->]  (2.5,2.5)--(3.5,2.5);
\draw[ultra thick, red, ->]  (0.5,4.5)--(1.5,4.5);
\draw[ultra thick, red, ->]  (0.5,4.5)--(0.5,3.5);
\draw[ultra thick, red, ->]  (0.5,4.5)--(-0.5,4.5);
\draw[ultra thick, red, ->]  (0.5,4.5)--(0.5,5.5);
\draw[ultra thick, black, ->]  (-0.5,4.5) arc(225:300:0.707);
\draw[ultra thick, black, ->]  (0.5,5.5) arc(45:-30:0.707);
\draw[ultra thick, fill=green] (0.5,4.5) circle (0.2);

\end{tikzpicture}
\caption{An illustration of the grid world in the simulated scenario. The walls are denoted in gray. The possible motions of an agent during one time step are denoted in red; the agent can also remain in place. If the agent's action results in the agent moving into a wall, the agent automatically returns to its last non-wall position in the subsequent time step (denoted in black).}
\label{grid}
\end{figure}

The agent can use one of $5$ actions at every point in time: $A=\{1,2,3,4,5\}$. To reduce the computational complexity of the calculations and fit in line with previous work on planning for similar tasks \citep{Toretal18}, the transition probabilities at every non-edge tile in the grid are known to the agent to be the same; however, the probabilities themselves are not known to the agent. At the beginning of the system run, if the agent is at a non-edge tile, action $1$ results in the agent moving north, $2$ in moving east, $3$ in moving south, $4$ in moving west, and $5$ in remaining in place. However, during the system run, actions $1$ and $3$ are slowly switching their outcomes, and so are actions $2$ and $4$. More precisely, at time $t$, action $1$ ($2$, $3$, $4$, respectively) will result in the agent moving north (east, south, west, respectively) with probability $1-t/100$ for $t\leq 100$ and $0$ for $t>100$ and in the agent moving south (west, north, east, respectively) with probability $t/100$ for $t\leq 100$ and $1$ for $t>100$.

We consider two settings: in the first one, the agent solely seeks to learn the transition probabilities, while in the second one, it seeks to satisfy a patrolling objective.

\subsubsection{Learning}
\label{ex1l}

In this task, the agent's sole goal is to learn the transition probabilities. Its control action is always the action that has been least used so far in the system run. In this example, we compare two basic ways of agent's learning: (i) by performing classical estimation---assuming that the transition probabilities are time-invariant, counting outcomes, and dividing by the number of times that an action was taken---and (ii) by using the knowledge that the change in transition probabilities between consecutive time steps is no larger than $0.01$ and obtaining the CCMLE. While method (i), also used by \citet{Ortetal20}, will lead the agent to converge to the correct transition probabilities (as they are time-invariant after $t=100$), such convergence is only asymptotic. On the other hand, the CCMLE method in (ii) takes into account the possible change in transition probabilities, and implicitly quickly rejects those samples that were collected much earlier during the system run. The average error $\sum_{s,a,s'}|\tilde{P}(s,a,s',t-1)-P(s,a,s',t-1)|/(|S|^2|A|)$ in estimated transition probabilities at time $t$ is given in Figure~\ref{avgerr}(a), and the maximal error $\max_{s,a,s'}|\tilde{P}(s,a,s',t-1)-P(s,a,s',t-1)|$ is given in Figure~\ref{avgerr}(b). We note that all the results in this section correspond to the $5\times 5$ grid. However, since the probabilities of moving in a particular direction in the grid do not depend on the state, the results for grids of all sizes are qualitatively the same.

\begin{figure}[t]
\centering
\includegraphics[width=0.45\textwidth]{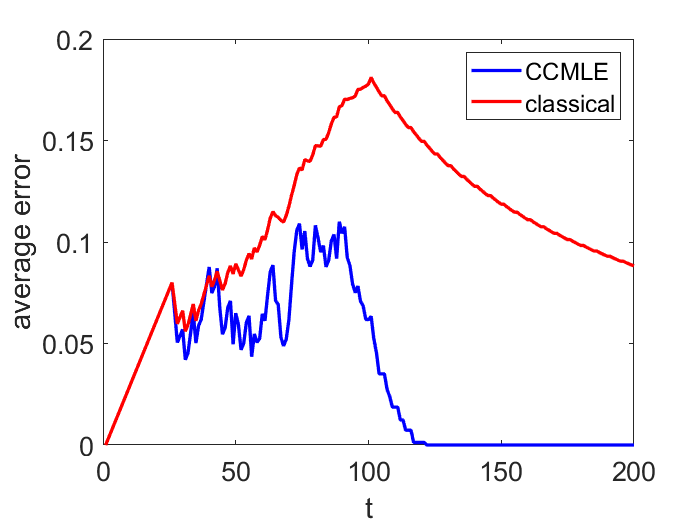}
\includegraphics[width=0.45\textwidth]{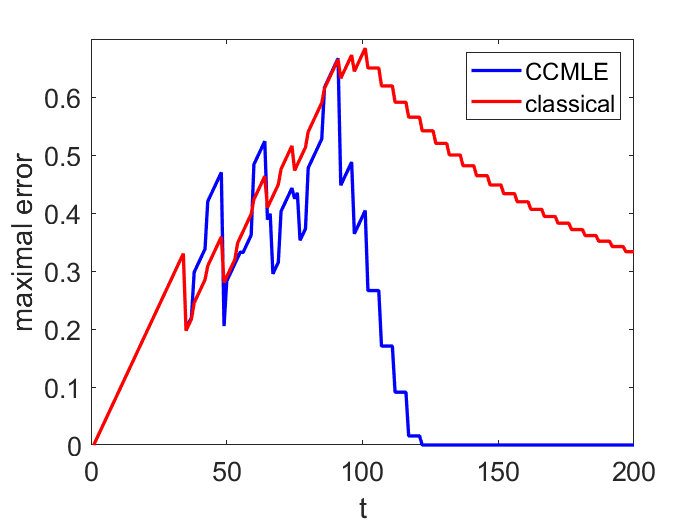}
\includegraphics[width=0.45\textwidth]{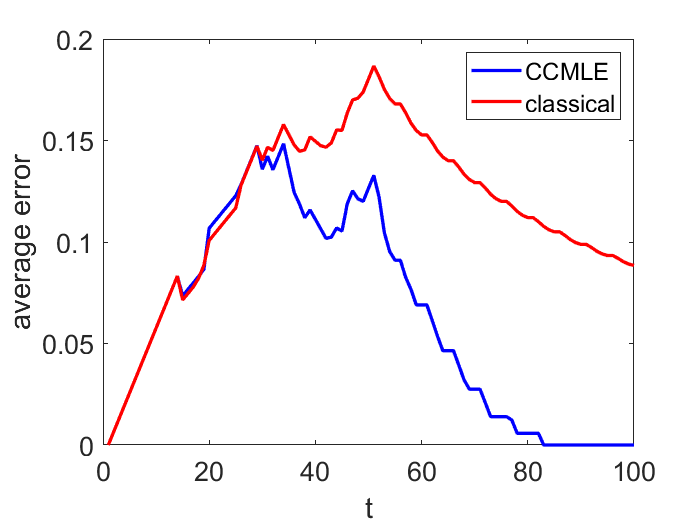}
\includegraphics[width=0.45\textwidth]{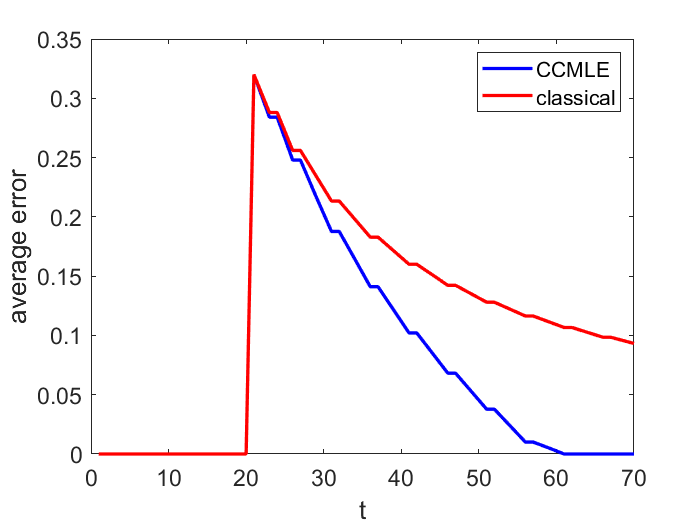}

\caption{(a) Average error in the transition probability estimates. (b) Maximal error in the transition probabilities estimates. (c) Average error in the transition probability estimates, in the scenario where the rate of change is twice higher than expected. (d) Average error in the transition probability estimates, in the scenario with a single-step immediate switch in transition probabilities.}
\label{avgerr}
\end{figure}

The CCMLE method produces significantly better results than classical estimation. Near the start of the system run, the importance of changes over time is small, and the two methods perform similarly. As the transition probabilities continue changing, classical estimation is unable to adapt, and the error in the estimated transition probabilities continues growing. While the CCMLE method is also unable to provide entirely correct estimates, its estimates are on average better than the classical one, and, at around $t=90$, i.e., even before the transition probabilities stop changing, both the average and the maximal errors begin to quickly decrease. While Theorem \ref{secbig} only guarantees that the CCMLE method will produce correct estimates of transition probabilities at time $t=200$, this result already occurs around $t=120$, i.e., only $20$ time steps after the transition probabilities cease changing. On the other hand, classical estimation continues having a comparatively large (albeit diminishing) average and maximal error. 

To display the power of CCMLE, we end the task by considering a scenario not covered by the theoretical work of previous sections: the transition probabilities changing more rapidly than the bound $\varepsilon_t$. In particular, we retain the above scenario, and the agent still counts on the transition probabilities changing by no more than $0.01$ between two consecutive steps. However, the transition probabilities proceed to change more quickly. In the first case, the probabilities shift by $0.02$ instead of $0.01$, thus completing the shift by time $t=50$. In the second case the transition probabilities shift by $1$ in a single step: instead of shifting in slow increments as before, the switch in outcomes between actions $1$ and $3$, as well as actions $2$ and $4$, happens instantaneously at time $t=20$. Figures~\ref{avgerr}(c) and Figure~\ref{avgerr}(d) illustrate the average error in the estimated transition probabilities at time $t$ for these two cases, respectively, again comparing classical estimation unconscious of possible changes in transition probabilities and the CCMLE. In both cases the CCMLE outperforms the classical estimation. In particular, in the latter, more extreme scenario, the CCMLE method obtains an entirely correct model around $40$ time steps after the sudden change. The estimated transition probabilities remain correct at all times afterwards. On the other hand, while classical estimation converges towards the correct transition probabilities, it does so slowly. For instance, $50$ time steps after the switch, the error is still greater than the error of the CCMLE at $25$ time steps after the switch, and the error of the classical estimation never reaches $0$. Table \ref{tab2} compares the mean errors over time of the CCMLE with those of classical estimates.

\begin{table}[b]
    \centering
    \begin{tabular}{|c|c|c|}
    \hline
     & \multicolumn{2}{c|}{\textbf{Mean error}} \\
    \hline
    \textbf{Experiment} & \textbf{CCMLE} & \textbf{Classical estimation} \\
    \hline
    (a) & 0.03 & 0.11  \\
    \hline
    (b) & 0.18 & 0.41 \\
    \hline
    (c) & 0.06 & 0.12 \\
    \hline
    (d) & 0.1 & 0.16 \\
    \hline
    \end{tabular}
    \caption{Mean errors for experiments described in Figure \ref{avgerr}. The mean error in (d) only considers time steps from $t=21$ to $t=70$, as there are no errors before the change at $t=20$.}
    \label{tab2}
\end{table}

\subsubsection{Planning}
\label{ex1p}
In this task, the agent seeks to satisfy the following control objective: reach the eastern wall of the grid, then reach the southern wall, then the western wall, then the northern wall, in this order, and repeat the process indefinitely. Such an objective is a patrolling task in the sense of \citet{Toretal18}, and is a version of the pickup-delivery objective as defined by \citet{Cheetal12}, where there are multiple pickup and delivery points. We consider the setting of experiments (a) and (b) described in Figure \ref{avgerr}, i.e., transition probabilities slowly changing over $100$ time steps, with $\varepsilon=1/100$.

If we encode the described control objective into a reward function, the rewards that the agent obtains are time-varying, i.e., depend on the agent's previous path. While such a framework technically differs from the setting of the previous sections, the agent is still able to compute the optimal policy \eqref{defopt} at every time, given the rewards at that time, and then recompute it once the rewards change. In our framework, the agent uses the policy \eqref{defopt} with $\beta=0$---thus, without any conscious exploration effort---and with a horizon long enough to ensure that it has the incentive to visit the desired wall as soon as possible. Figure~\ref{walrea} illustrates how often the agent is able to reach its objective wall. The two methods enjoy a comparable success rate at the beginning. However, the agent that learned using the CCMLE is able to adapt to the changes in transition probabilities much more quickly than the agent that uses a classical estimation method. By time $t=200$, the agent using CCMLE visits $38$ walls, while the agent that uses classical estimation visits $18$ walls.

\begin{figure}[t]
\centering
\includegraphics[width=0.55\textwidth]{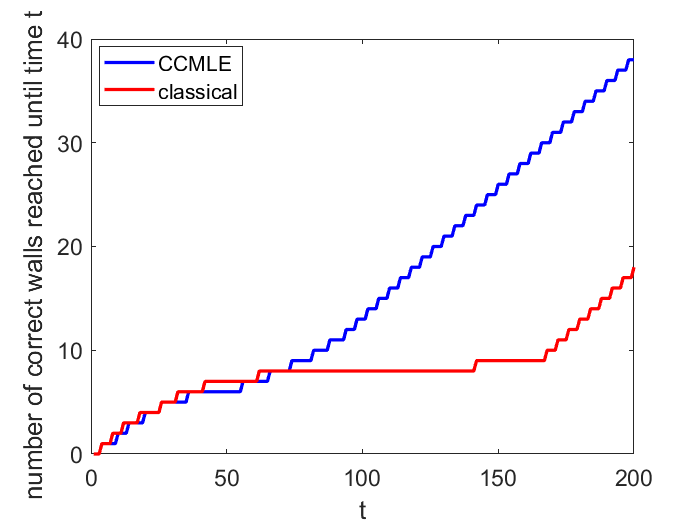}
\caption{Satisfaction of control objectives. The red curve and blue curve indicate, respectively, the number of times the agent reached a correct wall until time $t$ when using the plan that assumes time-invariant transition probabilities and when using the CCMLE-based plan conscious of changes in dynamics.}
\label{walrea}
\end{figure}

While the above simulation, along with other simulations in this section, does not present the effect of different initial states on the CCMLE and subsequent learning and control policies, we remark that such an effect is not difficult to deduce---at least on an informal level---given the time-varying nature of dynamics. Namely, for a state space where an agent may not visit a particular state-action pair for a substantial number of steps (``sparse state space''), the estimate of probabilities at states that have not been recently visited will necessarily be incorrect using \textit{any} estimation method, as the dynamics will have changed since the last visit. Indeed, the proof of Theorem~\ref{thmunc} indicates that the CCMLE method has no knowledge about the transition probabilities $P_t(s,a,\cdot)$ of a state-action pair $(s,a)$ after the last time $t$ it has been visited, and can equally choose any probabilities consistent with its bounds on the rate of change of probabilities and its estimate of $P_t(s,a,\cdot)$. Thus, for a ``sparse state space'', the agents with different initial states are more likely to have---at least at the beginning of their run---visited different parts of the state space, and thus will have meaningful estimates of transition probabilities only at different parts of the state space. Their control policies may thus substantially differ. On the other hand, for a ``dense state space'' where an agent frequently visits each state-action pair regardless of its starting state, the estimates will be similar for all initial states. Two agents situated at the same state at the same time will thus possibly take the same action, even if their initial states were different. Following this discussion, we omit in-depth discussions of different initial states from the remainder of the section.

\subsection{Periodically Changing Transition Probabilities}
\label{sim2}

The scenario we simulate in this section is that of a $2$-state TVMDP illustrated in Figure~\ref{2mdp}. As in Section \ref{sim1}, we first consider solely estimation, i.e., learning, and then joint learning and planning. Since the TVMDP given in Figure~\ref{2mdp} only contains a single action, for the discussion of planning we will append an additional action.

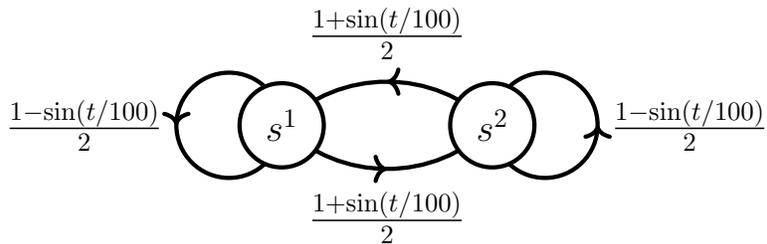
\begin{figure}[t]
\centering
\begin{tikzpicture}[scale=0.7]
\tikzset{->-/.style={decoration={
  markings,
  mark=at position .5 with {\arrow{>}}},postaction={decorate}}}
    	\draw[ultra thick,black,->-] (0,0) arc(225:315:2.83);
     	\draw[ultra thick,black,->-] (4,0) arc(45:135:2.83);
     	\draw[ultra thick,black,->-] (0,0) arc(0:360:1);
     	\draw[ultra thick,black,->-] (4,0) arc(180:540:1);
        \node at (2,1.75) {\Large $\frac{1+\sin(t/100)}{2}$};
        \node at (2,-1.75) {\Large $\frac{1+\sin(t/100)}{2}$};
        \node at (-3.75,0) {\Large $\frac{1-\sin(t/100)}{2}$};
        \node at (7.75,0) {\Large $\frac{1-\sin(t/100)}{2}$};        

\draw[ultra thick, fill=white] (0,0) circle (0.8cm);
\draw[ultra thick, fill=white] (4,0) circle (0.8cm);
        \node at (0,0) {\Large $s^1$};
        \node at (4,0) {\Large $s^2$};
\end{tikzpicture}
\caption{The scenario of periodically changing transition probabilities. The time-varying transition probabilities, a priori unknown to the agent, are indicated next to the arrows indicating transitions.}
\label{2mdp}
\end{figure}

\subsubsection{Learning}
\label{ex2l}

As in the previous section, we compare classical estimates, produced by assuming that the transition probabilities are time-invariant, to the CCMLE. In particular, we assume that it is a priori known that the transition probabilities change by no more than 
\begin{equation}
\label{vare}
\varepsilon=\frac{1}{2}\left(1-\cos\frac{1}{100}+\sin\frac{1}{100}\right)\textrm{.}
\end{equation}
However, the agent does not know the exact change in the transition probabilities, nor is it aware that the transition probabilities are periodic.

Figure~\ref{sinest}(a) shows the estimate of the probability of a \textit{switch}: a transition that moves the agent from $s^1$ to $s^2$ or vice versa. Classical estimation that assumes time-invariant transition probabilities obviously produces estimates that converge towards the mean of the time-varying transition probability. Thus, the obtained estimates are increasingly less accurate as time progresses. On the other hand, the CCMLE tracks the true transition probability with remarkable accuracy.

\begin{figure}[t]
\centering
\includegraphics[width=0.45\textwidth]{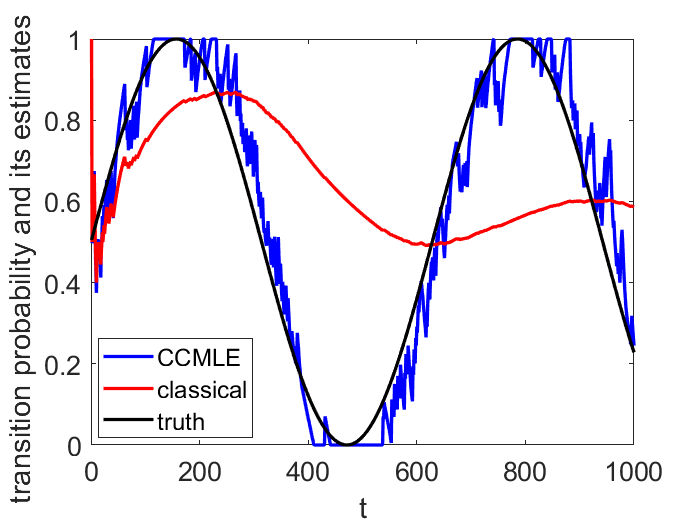}
\includegraphics[width=0.45\textwidth]{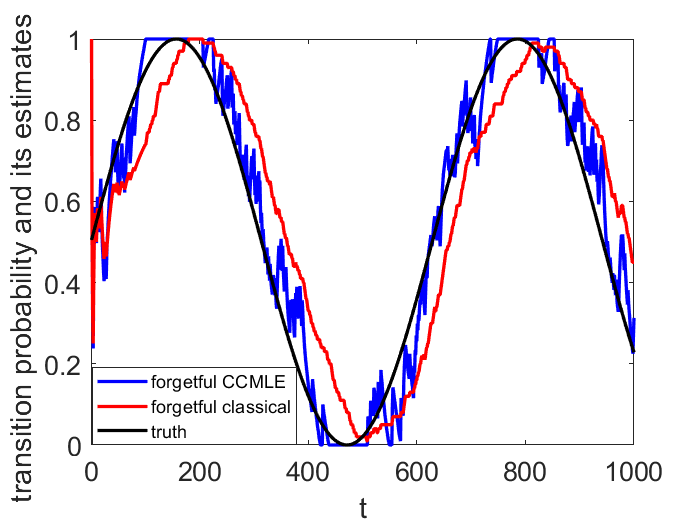}

\caption{(a) Estimates of the transition probability which results in a switch. (b) Estimates of the transition probability which results in a switch, with forgetful estimation. The black curves indicate the true probability.}
\label{sinest}
\end{figure}

As mentioned in Section \ref{estim}, classical learning can be heuristically made aware of the changes in the transition probabilities by using a sliding window, i.e., counting only the samples obtained within finitely many previous steps. On the other hand, making the CCMLE method---which is already explicitly aware of the bound on the changes of transition probabilities---forgetful can be useful to reduce the complexity of the relevant optimization problems. The results of Section \ref{estim} indicate that $1/\varepsilon$ may be an appropriate amount of memory. Figure~\ref{sinest}(b) gives the comparison between the two forgetful learning methods. The classical estimation method, now made forgetful, identifies the transition probabilities with a delay---by the time it collects enough samples about a particular transition probability, the probability has changed. On the other hand, introduction of forgetfulness into the CCMLE does not result in a significant impact on its quality; in fact, as shown in Table~\ref{tab3}, it results in a slight reduction of its error.

The CCMLE, with or without forgetfulness, still outperforms the classical estimation, even when the classical method is improved by introducing forgetfulness. Table \ref{tab3} compares the mean errors over time of the CCMLE, forgetful CCMLE, classical, and forgetful classical estimates.

\begin{table}[b]
    \centering
    \begin{tabular}{|c|c|c|c|}
    \hline
    \textbf{CCMLE} & \textbf{Forgetful CCMLE} & \textbf{Classical} & \textbf{Forgetful classical} \\
    \hline
    0.06 & 0.05 & 0.28 & 0.14 \\
    \hline
    \end{tabular}
    \caption{Mean errors for experiments described in Figure \ref{sinest}.}
    \label{tab3}
\end{table}

\subsubsection{Planning}
\label{ex2p}

In order to discuss the optimal control policy, we now slightly modify the working TVMDP, as shown in Figure~\ref{2mdpb}. Our new setting introduces a single deterministic action ($black$) available at state $s^2$. We define the reward for such an action by $R(s^2,black)=3$. State $s^1$ admits two available actions: action $blue$ is deterministic and we define $R(s^1,blue)=1$, while action $red$ may end up in two outcomes, as shown in Figure~\ref{2mdpb}, and its reward is given by $R(s^1,red)=0$. The agent's starting state is $s^1$.

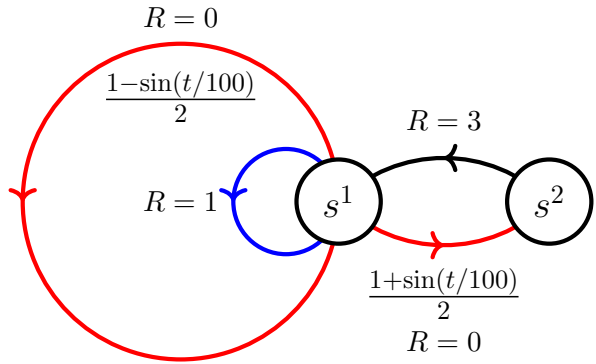
\begin{figure}[t]
\centering
\begin{tikzpicture}[scale=0.7]
\tikzset{->-/.style={decoration={
  markings,
  mark=at position .5 with {\arrow{>}}},postaction={decorate}}}
    	\draw[ultra thick,red,->-] (0,0) arc(225:315:2.83);
     	\draw[ultra thick,black,->-] (4,0) arc(45:135:2.83);
     	\draw[ultra thick,blue,->-] (0,0) arc(0:360:1);
     	\draw[ultra thick,red,->-] (0,0) arc(0:360:3);
        \node at (2,-1.75) {\Large $\frac{1+\sin(t/100)}{2}$};
        \node at (-3,2) {\Large $\frac{1-\sin(t/100)}{2}$};
        \node at (-3,3.5) {$R=0$};
        \node at (-3,0) {$R=1$};
        \node at (2,-2.65) {$R=0$};
        \node at (2,1.55) {$R=3$};

\draw[ultra thick, fill=white] (0,0) circle (0.8cm);
\draw[ultra thick, fill=white] (4,0) circle (0.8cm);
        \node at (0,0) {\Large $s^1$};
        \node at (4,0) {\Large $s^2$};
\end{tikzpicture}
\caption{The scenario of periodically changing transition probabilities, with control actions. The transition probabilities when using the black and blue actions are $1$; those actions are deterministic. The transition probabilities when using the red action are as indicated.}
\label{2mdpb}
\end{figure}

We note that the described setting corresponds to the two-armed semi-Markov bandit problem in the sense of \citet{Tsi94} and \citet{DufBar97}; we omit a more detailed description of the bandit, but consider a related multi-armed bandit problem in the subsequent example.

Throughout this example, for simplicity we assume that the agent is a priori aware that the actions $blue$ and $black$ are deterministic. Hence, its uncertainty is only about action $red$; again, the agent knows that the rate of change does not exceed $\varepsilon$ given in \eqref{vare}. The problem of computing the agent's uncertainty, as given in Definition \ref{dunc}, can be made computationally more simple owing to the fact that action $red$ only has two possible outcomes: agent's total uncertainty and the uncertainty in the estimated probability of a switch are scalar multiples, the former being larger by a factor of $\sqrt{2}$.

In order to maximize its average collected reward, the agent applies the optimal policy given in \eqref{defopt}, with the horizon length equal to $1$, and recomputes and reapplies it at every time step. In other words, at every time the agent cares only about the results of its current and next step. We compare the agent's average reward between three cases: (a) when the agent bases its decision on the estimates obtained by classical estimation, (b) when the agent uses CCMLE, but does not use active learning, and (c) when the agent uses CCMLE with active learning, i.e., $\beta>0$.

An agent that uses learning based on classical estimation, with or without a learning bonus as used by \citet{KolNg09} and \citet{StrLit08}, would choose action $red$ when its estimate of transitioning to state $s^2$ with the red action is greater than $2/3$, or it receives a sufficient bonus, and choose action $blue$ otherwise. Such an agent will always eventually choose action $blue$: learning bonuses will eventually converge to $0$, and the estimate of the probability of a switch for an agent that uses classical learning eventually converges to $1/2<2/3$. Thus, the average reward of an agent that assumes that probabilities are not changing will converge to $1=R(s^1,blue)$.

An agent that uses the CCMLE  without active learning, i.e., applies \eqref{defopt} with $\beta=0$, would essentially fall into the same trap. Once its estimate of the probability of a switch falls under $2/3$ once---which is likely to happen, due to the oscillating nature of the transition probabilities and the good quality of estimation exhibited by a CCMLE-based learner in previous examples---the agent will again cease to vary its actions, and will use solely action $blue$. Thus, its average reward will converge to $1$.

If $\beta>0$, the agent may choose action $red$ in order to reduce its uncertainty. Thus, even after its estimate of the probability of a switch falls under $2/3$, it may still occasionally perform action $red$, thus allowing itself to observe the changed transition probabilities and restart collecting higher rewards in the periods when the transition probability is higher than $2/3$. An example of such behavior is exhibited in Figure~\ref{avgrew}, for $\beta=3/\sqrt{2}$.

\begin{figure}[t]
\centering
\includegraphics[width=0.55\textwidth]{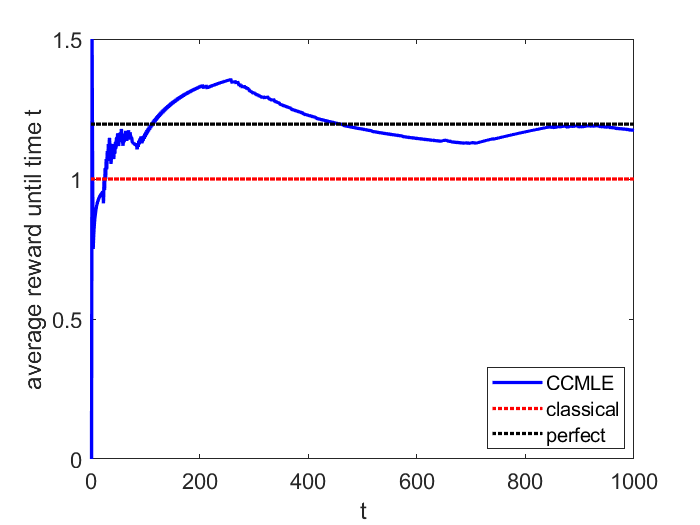}
\caption{Rewards obtained by an agent in a two-state MDP. The blue line indicates the average reward obtained by an agent who uses the CCMLE with an uncertainty-based learning bonus. The red line indicates both the asymptotic average reward obtained by using classical estimation unconscious of the change in the transition probabilities and that of the CCMLE without active learning. The dotted black line indicates the asymptotic average reward obtained by an agent who has perfect knowledge of the transition probabilities.}
\label{avgrew}
\end{figure}

As Figure~\ref{avgrew} shows, the average reward obtained by an agent who uses policy \eqref{defopt} converges to around $1.15$. This is a significant improvement over the average reward of $1$: an agent with perfect knowledge who chooses to use action $red$ whenever the transition probability of a switch is greater than $2/3$ and action $blue$ otherwise will obtain an asymptotic average reward of around $1.2$. Even so, slightly higher gains may be obtained by a different choice of $\beta$. We note, however, that increasing $\beta$ does not simply result in a higher reward. For high enough $\beta$ the agent will always choose action \textit{red} in order to reduce uncertainty. Numerical results show that such an agent will obtain an asymptotic average reward of $0.88$; worse than rewards obtained for both $\beta=0$ and $\beta=3/\sqrt{2}$.

\subsection{Multi-Armed Bandit}
\label{mab}

Motivated by the two-armed bandit of the previous section, in this example, we consider a standard $n$-armed bandit setting \citep{Tsi94}. At each time step, the agent pulls one arm of a bandit. Each arm pull lasts a single time step, and produces a different reward depending on the arm. Arm $1$ always produces a reward of $1$. For each $i$, $i\in\{2,\ldots,n\}$, arm $i$ produces one of the two rewards at time $t$: a reward of $i$ with probability $0.95(\sin(\gamma_it+\delta_i)+1)/i$, and a reward of $0$ with probability $1-0.95(\sin(\gamma_it+\delta_i)+1)/i$. While the possible rewards are known to the agent, their probabilities are not.

The considered setting does not immediately follow the framework of the remainder of this paper; namely, there are no transitions between states, and the agent's collected reward is unknown in advance. Nonetheless, we are able to convert such a framework into a TVMDP by introducing states $s_0, \ldots, s_{n+1}$. The actions available at state $s_0$ are the bandit arms. Pulling arm $1$ results in the agent moving to state $s_1$. Pulling arm $i\geq 2$ results in the agent moving to either the state $s_i$ or $s_{n+1}$, the former with probability $0.95(\sin(\gamma_it+\delta_i)+1)/i$ and latter with probability $1-0.95(\sin(\gamma_it+\delta_i)+1)/i$. States $s_1, \ldots, s_{n+1}$ admit only one action, which moves the agent immediately back to $s_0$. The rewards at states $s_i$, $i\in\{1,\ldots,n\}$, equal $i$, while the rewards at states $s_0$, regardless of the action, and state $s_{n+1}$ equal $0$. For simplicity of the narrative, we assume that both transitions from $s_0$ to some $s_i$ and from $s_i$ to $s_0$ occur together in one time step instead of two.

It is clear that, as $t\to\infty$, the average reward produced by each arm $i$, $i\in\{2,\ldots,n\}$, converges to $0.95$. Thus, an agent that uses classical estimation for learning and planning and pulls the arm that is estimated to bring the highest reward will always eventually begin solely using arm $1$. The average collected reward will thus converge to $1$.

On the other hand, if sets $\{\gamma_i~|~i\geq 2\}$ or $\{\delta_i~|~i\geq 2\}$ are ``sufficiently different'', at every time there will exist an arm producing a reward greater than $1$. Thus, by choosing the bandit arms wisely, an agent may collect an average reward greater than $1$: the primary challenge is in deciding when to choose which arm to pull. In combination with the CCMLE, the notion of uncertainty introduced in Section \ref{unc} and the subsequent control policy introduced in Section \ref{contrp} provide a possible solution. By performing exploratory pulls when the uncertainty of probabilities in a particular arm becomes high, the agent is able to detect when an arm yields a high probability of rewards.

We simulated a $5$-arm bandit during $10000$ time steps, with all $\gamma_i$ chosen uniformly at random in $[0,1/5]$ and $\delta_i$ chosen uniformly at random in $[0,2\pi)$. We use the same horizon length as in Section \ref{sim2}. Bound $\varepsilon$ is $0.25$. To illustrate the power of CCMLE, such a bound is extremely loose---namely, it can be analytically shown that it will be at least $2.5$ times larger than the amount of maximal change on any arm, and at least $6.5$ times larger than the amount of maximal change on the fifth arm. Uncertainty weight $\beta$ is set to $3/(2\sqrt{2})$---a number chosen almost accidentally from previous versions of this experiment, thus without any particular tuning to this scenario. We remark that the computational complexity of the CCMLE method and subsequent planning depends only linearly on the number of bandits, as the optimization problems for computation of a CCMLE and uncertainties are performed separately on each bandit. While the number of variables in the optimization problems grows linearly with elapsed time, this dependence can be removed by adopting a notion of forgetfulness explored in Section \ref{sim2}. We adopt such a notion in the simulation, and limit the memorized history of outcomes for each bandit to $1+1/\varepsilon$, in line with the discussion at the end of Section \ref{ccmle}.

Figure~\ref{mbandits} shows the average collected reward attained by an agent that follows the CCMLE method of estimation and decides which arms to pull based on the sum of expected reward and uncertainty bonus. The planner achieves a reward that is $25\%$ higher than the average reward of an agent that does not use CCMLE or an uncertainty bonus. We emphasize that these results were obtained without any tuning of the weight $\beta$ or bound $\varepsilon$, the latter of which was intentionally poorly chosen.

\begin{figure}[t]
\centering
\includegraphics[width=0.55\textwidth]{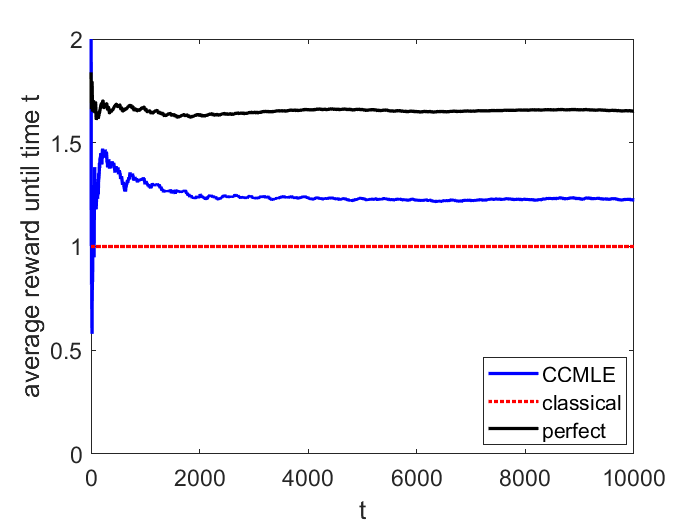}
\caption{Rewards obtained by an agent in a multi-armed bandit scenario. The blue line indicates the average reward obtained by an agent who uses the forgetful CCMLE with an uncertainty-based learning bonus to choose the best among the $5$ bandit arms. The dotted red line indicates the asymptotic average reward obtained by an agent that uses classical estimation unconscious of the change in the transition probabilities. The black line indicates the average reward obtained by an agent who has perfect knowledge of the transition probabilities.}
\label{mbandits}
\end{figure}

\subsection{Wind Flow Estimation}
\label{wfe}

Further building up the complexity of our simulated scenarios, we now provide a more realistic estimation example, simultaneously featuring (i) changes of multiple transition probabilities by different amounts, (ii) complex, time-varying, and partly random rates of change, and (iii) a poorly chosen a~priori bound $\varepsilon$ on the rate of change of transition probabilities, i.e., a bound that is sometimes overly conservative---allowing for changes larger than actual ones---and sometimes incorrect, expecting only changes smaller than the actual ones.

The setting of this simulation is that of an unpowered aerial vehicle without extensive instrumentation, i.e., a pilot balloon.
The movement of such a vehicle---tracked from the ground or using a GPS receiver as its sole instrument---is commonly used to estimate the dynamics in the balloon's area of operation \citep{Hic15}. As the balloon is not equipped with any instruments apart from possibly a GPS receiver, the estimation is based solely on the observed trajectory. In this section, we will develop a CCMLE-based method for estimating the balloon's dynamics in changing wind flow.

We develop our model of the system dynamics based on the work of \citet{AlSetal13}, which devises and discusses a scheme to convert wind direction into transition probabilities on an MDP. Given that the balloon is unpowered, an MDP reduces to a Markov chain, i.e., to an MDP with a single action. Additionally, as the balloon cannot intentionally keep revisiting the same state, in order to enable meaningful estimation, we make the assumption that the dynamics in the area of operation are \textit{uniform}, i.e., do not vary across the state space. While never entirely correct, such an assumption is generally reasonable in the absence of turbulent phenomena \citep{Liuetal12}. We allow for the wind flow and the subsequent dynamics to be time-varying.

We discretize the state space as a grid similar to the one used in Section \ref{sim1}, although we allow such a grid to be infinite or arbitrarily large. At every time step, the balloon moves by one tile in one of four possible directions, denoted by their angle with the positive ray of the $x$-axis: north ($\pi/2$), east ($0$), south ($3\pi/2$), or west ($\pi$). Combined with the assumptions of uniformity and time-varying nature of the wind flow, the presented setting yields a probability transition function $P:\{0,\pi/2,\pi,3\pi/2\}\times \NN_0\to[0,1]$, where $P(o,t)$ signifies the probability at time $t$ that the balloon will move in the direction denoted by $o$.

For simplicity, we consider wind speed to be constant, and only consider changes in its direction $d(t)\in[0,2\pi)$. Adapting the work of \citet{AlSetal13}, we develop the following model for transition probabilities. Given the wind direction $d$, the balloon movement $o\in \{0,\pi/2,\pi,3\pi/2\}$ is a discrete random variable obtained by rounding $O$ to the nearest $\pi/2$ (modulo $2\pi$), where $O$ is a normal random variable with mean $\mathbb{E}[O]=d$ and variance $\sigma^2=1/2$. In other words, $$\Prob(o|d)=\sum_{n=-\infty}^{\infty}\int_{2n\pi+o-\pi/4-d}^{2n\pi+o+\pi/4-d}\frac{2}{\sqrt{\pi}}e^{-\omega^2}d\omega\textrm{.}$$ The transition probability $P(o,t)$ is then naturally given by $P(o,t)=\Prob(o|d(t))$. 
We emphasize that the estimator is neither aware of the wind direction at any time, not aware of the relationship between $d$ and $o$. The estimator is not attempting to establish the wind direction $d(t)$, but solely estimate the transition probabilities $P(\cdot,t)$.

We simulate the wind flow given by \begin{equation*}
\begin{split}
d(t+1) & =d(t)-3\pi/180+X(t)\textrm{,} \\
d(0) & = D_0\textrm{,}
\end{split}
\end{equation*} where $X(t)$ is a random variable whose value is drawn from a uniform distribution on $[-\pi/180,\pi/180]$, and $D_0$ is a random variable whose value is drawn from a uniform distribution on $[0,2\pi]$. In other words, the wind changes direction by $2$ to $4$ degrees at every time step, resulting in continually changing transition probabilities.

To demonstrate the robustness of the CCMLE approach, we chose $\varepsilon=0.03$ as the bound on rate of change in $P$. Such a bound is intentionally incorrect; the actual change in $P$ can be computed from the equations above to be between $0.01$ and $0.04$. Figure~\ref{windrun} compares the maximal error in the transition probabilities of a CCMLE approach with $\varepsilon=0.03$ and a classical approach. It also provides the error in transition probabilities for the CCMLE with a correct bound of $\varepsilon=0.05$, which produces similar results to that of $\varepsilon=0.03$. The figure illustrates the estimation errors for $240$ time steps, allowing the wind direction vectors to make around two full circles.

\begin{figure}[t]
\centering
\includegraphics[width=0.55\textwidth]{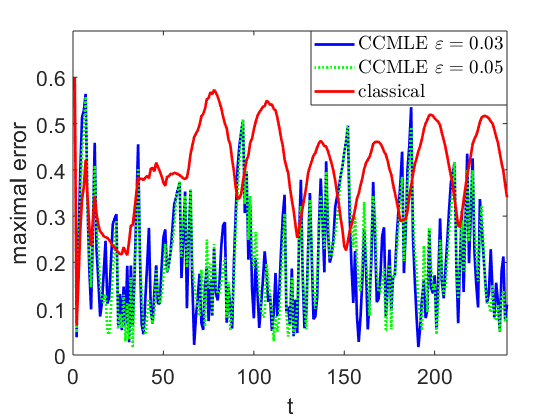}
\caption{Maximal error in the transition probability estimates for the wind flow estimation problem. The red curve indicates the error with classical estimation that assumes time-invariant wind flow. The blue curve indicates the error with the CCMLE method with $\varepsilon=0.03$. The green dotted curve indicates the error with the CCMLE method with $\varepsilon=0.05$.}
\label{windrun}
\end{figure}

The simulated setting with $\varepsilon=0.03$ is not covered by the developed theory of CCMLE. Nonetheless, the CCMLE approach results in significantly better estimation than the classical estimation method. While ``spikes'' in errors are provably unavoidable---when the transitions are non-deterministic, any arbitrarily large sequence of low-probability outcomes will eventually occur---the error from the CCMLE exceeds the error of classical estimation only on a handful of occasions. The average of the maximal CCMLE error over time (both with a slightly incorrect bound of $\varepsilon=0.03$ and a correct bound of $\varepsilon=0.05$) equals $0.21$, while the average of the maximal error in the classical estimation method is nearly double, equaling $0.4$.

\section{Conclusions}
\label{conc}

The work in this paper concentrated on presenting an integrative method for estimation, learning, and planning of an agent operating in an unknown TVMDP. The proposed method is founded on introducing three notions:
\begin{itemize}
\item \textit{change-conscious maximal likelihood estimation (CCMLE)}, which exploits the knowledge on the maximal possible rate of change of transition probabilities to produce time-varying estimates based on the observed outcomes of agent's actions;
\item \textit{uncertainty of an estimate}, which quantifies the lack of knowledge about transition probabilities at a particular time during the system run; and
\item \textit{optimal control policy with an uncertainty-based learning bonus}, which aims to enable the agent to actively learn about transition probabilities in order to increase its long-term attained reward.
\end{itemize}

As shown in Proposition \ref{gener}, when used in a time-invariant MDP, the CCMLE produces the same estimates as the classical method based on the frequency of all previously observed outcomes. On the other hand, as indicated by the theoretical results of Section \ref{estim} and validated on the numerical examples in Section \ref{sim}, in a time-varying setting the CCMLE produces significantly better estimates of transition probabilities than the method based on the frequency of all previously observed outcomes, which implicitly assumes that the transition probabilities are time-invariant. Similarly, the notion of uncertainty introduced in Section \ref{unc} reduces to previous methods for describing the lack of knowledge about transition probabilities in time-invariant MDPs, while providing a novel measure of the lack of knowledge about transition probabilities in the time-varying setting. As proposed in Section \ref{pols}, such a measure can then be used to design a learning bonus for a learning and control policy of an agent operating in a TVMDP, once again generalizing the active learning and control policies previously introduced for time-invariant MDPs. Numerical examples of Section \ref{sim} show that the proposed policies enable an agent to successfully learn the transition probabilities and achieve its control objective, surpassing the outcomes of classical estimation methods. 

The work presented in this paper presents an initial discussion of optimal estimation and learning methods for time-varying stochastic control processes. While theoretical results and numerical examples encourage future exploration of the CCMLE and CCMLE-based learning and planning, these results are by no means exhaustive. Namely, the behavior of the CCMLE in the case of changing transition probabilities has been theoretically explored only for transition probabilities which eventually equal $1$. A natural next step is to obtain theoretical bounds for estimation error and convergence to correct transition probabilities in the case when all transition probabilities are in $[0,1)$. On the side of active learning, Theorem \ref{next} relates the measure of uncertainty introduced in this paper to exploration bonuses used in time-invariant MDPs. However, there are currently no formal guarantees---parallel to the work that uses classical exploration bonuses---on the convergence to optimal control policy when using an uncertainty-based learning bonus, either in a time-varying or time-invariant setting. The role of bonus multiplier $\beta$ remains to be discussed---while for $\beta=0$ the agent does not actively learn, and for $\beta\to\infty$ the agent solely learns and does not have any incentive to maximize its collective reward, it is currently unknown how $\beta$ should be chosen to obtain an optimal policy. Additionally, while the example of Section \ref{mab} shows that one could extend the presented theory of CCMLE to tackle learning and planning with time-varying unknown rewards, we leave the issue of general unknown rewards for future work. Along the same line of thinking, it would be fascinating to combine the recent work of \citet{Ortetal20} with the one proposed in this paper: the planning policy for TVMDPs used by \citet{Ortetal20} results in strong theoretical guarantees on collected rewards, even if the rewards are a priori unknown. However, the estimation method of \citet{Ortetal20} is simple and based on classical estimation. Replacing it with the CCMLE may lead to an increase in the overall collected rewards, possibly adapting the guarantees of \citet{Ortetal20} to a new joint learning and planning method.

Finally, we note that the current paper presents the CCMLE solely in the framework of TVMDPs. The same method, however, may be easily adapted to the framework of general online learning and planning, where the problem is to estimate a time-varying signal from a time series of observations, where the signal is known to change with a rate no greater than some a priori known bound, and use the estimate to plan future actions. A preliminary approach to learning in such a framework has been introduced by \citet{ElKOrn20}, drawing from the current manuscript, but it largely focuses on an application to hypersonic flight vehicles. A similar problem has also been considered by \citet{YuaLam19}; however, instead of the CCMLE approach of attempting to find the most likely time-varying parameter satisfying the constraint on the maximal rate of change, \citet{YuaLam19} attempt to find the time-varying signal that offers the least regret against a comparison sequence that satisfies a similar constraint. The CCMLE problem for general online learning and planning, with all the questions opened in this paper, thus remains largely open.

\acks{This work was partly supported by an Early Stage Innovations grant from NASA's Space Technology Research Grants Program (grant no.\ 80NSSC19K0209), Sandia National Laboratories grant no.\ 2100656, National Science Foundation grants no.\ 1646522 and no.\ 1652113, Defense Advanced Research Projects Agency grant no.\ W911NF-16-1-0001, and Office of Naval Research grant no.\ N00014-20-1-2115. We thank the anonymous reviewers for their valuable input.}

\appendix

\section{Proofs of Theoretical Results}
\label{appx}
\begin{proof}\textbf{of Proposition \ref{gener}}\quad
To simplify notation, assume that $|A|=1$, i.e., $A=\{a\}$. As alluded to in Section \ref{estim}, discrete distributions $\tilde{P}(s,a,\cdot,t)$ for different $s\in S$ never appear together in the constraints of \eqref{bigop2}. Hence, \eqref{bigop2} can be separated into $|S|$ problems
\begin{equation}
\label{bigopne}
\begin{array}{{>{\displaystyle}c}*2{>{\displaystyle}l}}
\min_{\tilde{P}^T(s,a,\cdot,\cdot)} & \quad -\sum_{i=0}^{\#(s,a)}\log \tilde{P}^T(s,a,s_{\nu_{i;s}+1},\nu_{i;s})  \\
\textrm{s.t.} & \quad \tilde{P}^T(s,a,s',t)\geq 0 & \quad \textrm{ for all } s'\in S, t<T\textrm{,} \\ & \quad
\sum_{s'\in S} \tilde{P}^T(s,a,s',t)=1\textrm{,}
& \quad \textrm{ for all } t<T\textrm{,} \\ & \quad |\tilde{P}(s,a,s',t+1)-\tilde{P}(s,a,s',t)|\leq\varepsilon_t & \quad \textrm{ for all } s'\in S\textrm{, } t<T\textrm{,}\end{array}
\end{equation}
where $\nu_{i;s}$ denotes the time at which state $s$ has been visited $i$-th time.

When $\varepsilon_t=0$, \eqref{bigopne} devolves into \begin{equation}
\label{bigop4}
\begin{array}{{>{\displaystyle}c}*2{>{\displaystyle}l}}
\min_{\tilde{P}^T(s,a,\cdot,*))} & \quad -\sum_{i=0}^{\#(s,a)}\log \tilde{P}^T(s,a,s_{\nu_{i;s}+1},*)  \\
\textrm{s.t.} & \quad \tilde{P}^T(s,a,s',*)\geq 0 & \quad \textrm{ for all } s'\in S\textrm{,} \\ & \quad
\sum_{s'\in S} \tilde{P}^T(s,a,s',*)=1\textrm{.}
& \end{array}
\end{equation}
Problem \eqref{bigopne} is a standard maximum likelihood estimation problem for a multinomial distribution, and it can be easily shown \citep{Zel06} that the solution is achieved for $\tilde{P}^T(s,a,s',*)=|\{t\in\{0,\ldots,T-1\}~|~s_t=s, s_{t+1}=s'\}|/\#(s,a)=\#(s,a,s')/\#(s,a)$.
\end{proof}

\begin{proof}\textbf{of Theorem \ref{firbig}}\quad
Since we only care about the case $\#(s,a)\to \infty$, we can assume that $T>N$. We will also assume without loss of generality that $N$ is the least integer such that $\varepsilon_t=0$ for all $t\geq N$. As in the proof of Proposition \ref{gener}, we assume that $A=\{a\}$, and will decouple the relevant optimization problem for different states $s$. To emphasize that transition probabilities $P(s,a,\cdot,t)$ are known to be time-invariant for $t\geq N$, we denote them, including the probability $P(s,a,\cdot,T-1)$, by $P(s,a,\cdot,*_{t\geq N})$, and analogously for their estimates.

For $k\geq l\geq 0$, let $\#^k_l(s,a)=|\{t\in\{l,\ldots,k-1\}~|~s_t=s\}|$, $\#^k_l(s,a,s')=|\{t\in\{l,\ldots,k-1\}~|~s_t=s,s_{t+1}=s'\}|$, and let $\nu_{i;s;l}$ be the $i$-th time at which $s$ has been visited on or after time step $l$. Using our previous notation, $\#(s,a)=\#_0^T(s,a)$ and $\#(s,a,s')=\#_0^T(s,a,s')$. 
The objective function in \eqref{bigopne} equals 
\begin{equation}
\label{eacht}
\begin{aligned}
& -\sum_{i=0}^{\#^N_0(s,a)}\log \tilde{P}^T(s,a,s_{\nu_{i;s;0}+1},\nu_{i;s;0})-\sum_{i=0}^{\#^T_N(s,a)}\log \tilde{P}^T(s,a,s_{\nu_{i;s;N}+1},*_{t\geq N}) \\
& = -\sum_{i=0}^{\#^N_0(s,a)}\log \tilde{P}^T(s,a,s_{\nu_{i;s;0}+1},\nu_{i;s;0})-\sum_{s'\in S}\#_N^T(s,a,s')\log \tilde{P}^T(s,a,s',*_{t\geq N})\textrm{.}
\end{aligned}
\end{equation}
From our derivation, if $\#_N^T(s,a,s')=0$, then we take $\#_N^T(s,a,s')\log \tilde{P}^T(s,a,s',*_{t\geq N})=0$ regardless of the finite or infinite value of $\log\tilde{P}^T(s,a,s',*_{t\geq N})$.

We note that $\#^N_0(s,a)$ is independent of $T$. Hence, if $\#(s,a)\to \infty$, then $\#^T_N(s,a)=\#(s,a)-\#^N_0(s,a)\to \infty$. If we divide the objective function \eqref{eacht} by $\#_N^T(s,a)$, we will obviously not change the solutions of the relevant optimization problem. The solutions of \eqref{bigopne} are thus the solutions of
\begin{equation}
\label{eacht2}
\begin{aligned}
m_1=\min_{\tilde{P}} -\sum_{i=0}^{\#^N_0(s,a)}\frac{\log \tilde{P}^T(s,a,s_{\nu_{i;s;0}+1},\nu_{i;s;0})}{\#_N^T(s,a)}-\sum_{s'\in S}\frac{\#_N^T(s,a,s')}{\#_N^T(s,a)}\log \tilde{P}^T(s,a,s',*_{t\geq N})\textrm{,}
\end{aligned}
\end{equation}with constraints
\begin{equation}
\label{newconst}
\begin{array}{{>{\displaystyle}c}*2{>{\displaystyle}l}}
\tilde{P}^T(s,a,s',t)\geq 0 & \quad \textrm{ for all } s'\in S, t<N\textrm{,} \\
\tilde{P}^T(s,a,s',*_{t\geq N})\geq 0 & \quad \textrm{ for all } s'\in S\textrm{,} \\
\sum_{s'\in S} \tilde{P}^T(s,a,s',t)=1\textrm{,}
& \quad \textrm{ for all } t<N\textrm{,} \\
\sum_{s'\in S} \tilde{P}^T(s,a,s',*_{t\geq N})=1\textrm{,}
& \\
|\tilde{P}(s,a,s',t+1)-\tilde{P}(s,a,s',t)|\leq\varepsilon_t & \quad \textrm{ for all } s'\in S\textrm{, } t<N-1\textrm{,} \\
|\tilde{P}(s,a,s',*_{t\geq N})-\tilde{P}(s,a,s',N-1)|\leq\varepsilon_{N-1} & \quad \textrm{ for all } s'\in S\textrm{.}
\end{array}
\end{equation}
Let, for each $T>N$, $\tilde{P}^T(s,a,\cdot,\cdot)$ be any solution of \eqref{eacht2}. 

Let us take any $s'\in S$. We first claim the following: unless $P(s,a,s',*_{t\geq N})=0$, there exists, with probability $1$, a $\delta>0$ and a large enough $T'$ such that $\tilde{P}^T(s,a,s',*_{t\geq N})\geq \delta$ for all $T\geq T'$. Assume otherwise. Then $\tilde{P}^{T_i}(s,a,s',*_{t\geq N})\to 0$ for some sequence $T_i\to +\infty$. However, as $\#_N^{T_i}(s,a,s')/\#_N^{T_i}(s,a)\to P(s,a,s',*_{t\geq N})\neq 0$ by the law of large numbers, the function value of  $\tilde{P}^{T_i}(s,a,\cdot,\cdot)$ in the objective function of \eqref{eacht2} goes to $+\infty$ as $T_i\to +\infty$. On the other hand, the objective function value of $P(s,a,\cdot,\cdot)$ converges to the finite number $-\sum_{s\in S'}P(s,a,s',*_{t\geq N})\log P(s,a,s',*_{t\geq N})$. Hence, for large enough $T_i$, $\tilde{P}^{T_i}(s,a,\cdot,\cdot)$ could not be an optimal solution of \eqref{eacht2}. Having proved the claim, we observe that, since there are only finitely many $s'\in S$, we can take $\delta>0$ such that the above claim holds for all $s'$ with the same $\delta$.

Let us now compare \eqref{eacht2} with \begin{equation}
\label{rework}
m_2=\min_{\overline{P}} -\sum_{s'\in S}P(s,a,s',*_{t\geq N})\log \overline{P}(s,a,s',*_{t\geq N})\textrm{,}
\end{equation}
where $\overline{P}$ satisfies the constraints from \eqref{newconst}. We assume the convention $0\log 0=0$. By an analogous discussion to that under \eqref{bigop2}, problem \eqref{rework} admits a solution. Let $\overline{P}(s,a,\cdot,\cdot)$ be a solution of \eqref{rework}. 

We observe the following facts:
\begin{enumerate}[(i)]
    \item Value of $m_2$ is less than or equal to the objective function value of $\tilde{P}^T(s,a,\cdot,\cdot)$ in \eqref{rework} for all $T$, by definition of $m_2$,
    \item For every $\varepsilon>0$, there exists $T'$ such that for all $T>T'$, the objective function value of $\tilde{P}^{T'}(s,a,\cdot,\cdot)$ in \eqref{rework} is less than or equal to the objective function value of $\tilde{P}^{T'}(s,a,\cdot,\cdot)$ in \eqref{eacht2} plus $\varepsilon$, i.e., of $m_1+\varepsilon$. The proof for this claim is as follows. The difference between the value in \eqref{rework} and in \eqref{eacht2} equals 
    \begin{equation}
    \label{eacht3}
    \begin{split}
    & \sum_{i=0}^{\#^N_0(s,a)}\frac{\log \tilde{P}^T(s,a,s_{\nu_{i;s;0}+1},\nu_{i;s;0})}{\#_N^T(s,a)} \\ & \quad +\sum_{s'\in S}\left(\frac{\#_N^{T}(s,a,s')}{\#_N^{T}(s,a)}-P(s,a,s',*_{t\geq N})\right)\log \tilde{P}^T(s,a,s',*_{t\geq N})\textrm{.}
    \end{split}
    \end{equation}
    The first sum is nonpositive. If $P(s,a,s',*_{t\geq N})=0$, then $\#_N^{T}(s,a,s')=0$ with probability $1$. Then, as discussed below \eqref{eacht2}, we take $$\left(\frac{\#_N^{T}(s,a,s')}{\#_N^{T}(s,a)}-P(s,a,s',*_{t\geq N})\right)\log \tilde{P}^T(s,a,s',*_{t\geq N})=0\textrm{.}$$
    If $P(s,a,s',*_{t\geq N})\neq 0$, we showed previously that there exists $\delta>0$ such that $\log \tilde{P}^T(s,a,s',*_{t\geq N})\geq \log \delta$ for all large enough $T$. Naturally, $\log \tilde{P}^T(s,a,s',*_{t\geq N})\leq 0$. Expression \eqref{eacht3} thus cannot exceed $$\sum_{s'\in S}\left|\left(\frac{\#_N^{T}(s,a,s')}{\#_N^{T}(s,a)}-P(s,a,s',*_{t\geq N})\right)\right||\log\delta|\textrm{.}$$ By the law of large numbers, with probability $1$ each of the summands above converge to $0$. Hence, for a large enough $T$, the value of \eqref{eacht3} will not exceed $\varepsilon$.
    \item The value of $m_1$ is less than or equal to the objective function value of $\overline{P}(s,a,\cdot,\cdot)$ in \eqref{eacht2} for all $T$, by definition of $m_1$.
    \item For every $\varepsilon>0$, there exists $T'$ such that for all $T>T'$, the objective function value of $\overline{P}(s,a,\cdot,\cdot)$ in \eqref{eacht2} is less than or equal to the objective function value of $\overline{P}(s,a,\cdot,\cdot)$ in \eqref{rework} plus $\varepsilon$, i.e., of $m_2+\varepsilon$. The proof for this claim is as follows. First, note that, since probabilities at time steps prior to $N$ do not come into the objective function of \eqref{rework}, we can without loss of generality ``overwrite'' all the values of $\overline{P}(s,a,s',t)$ with $\overline{P}(s,a,s',*_{t\geq N})$ and thus assume that either $\overline{P}(s,a,s',t)\neq 0$ for all $T<N$ or  $\overline{P}(s,a,s',*)=0$. 
    
    We again subtract the objective function of \eqref{eacht2} from \eqref{rework} and obtain \eqref{eacht3}, just with $\overline{P}$ instead of $\tilde{P}^T$. Now, $\overline{P}(s,a,s',*_{t\geq N})$ does not depend on $T$, so the second sum in \eqref{eacht3} this time trivially converges to $0$ by the law of large numbers. 
    
    Since the values of $\overline{P}(s,a,s',t)$ do not come into the objective function of \eqref{rework} for $t<N$, the following claim holds: for any solution $\overline{P}(s,a,\cdot,\cdot)$, the solution in which we just set $\overline{P}(s,a,s',t)=\overline{P}(s,a,s',*_{t<N})$ and take any values for those probabilities will also be minimal, as long as they stay within the constraints. Since $N$ is assumed to be the least integer such that $\varepsilon_t=0$ for all $t\geq 0$, we can perturb each value of $\overline{P}(s,a,s',*_{t\geq N})$ by some amount smaller than $\varepsilon_{N-1}>0$ so that no value equals $0$. Thus, we can consider $\overline{P}$ such that $\overline{P}(s,a,s',t)=\overline{P}(s,a,s',*_{t<N})\neq 0$ for all $s'\in S$ and $t<N$.
    
    Since we took that $\overline{P}(s,a,s_{\nu_{i;s;0}+1},\nu_{i;s;0})$ does not equal $0$, and it does not depend on $T$, the first sum in \eqref{eacht3} also converges to $0$ by the law of large numbers.
\end{enumerate}

By combining (i)-(iv), we now showed that for every $\varepsilon>0$, there exists $T'$ such that for all $T>T'$, the objective function value of $\tilde{P}^{T'}(s,a,\cdot,\cdot)$ in \eqref{rework} is between $m_2$ and $m_2+2\varepsilon$. 

It can easily be shown that, if $P(s,a,s',*_{t\geq N})=0$, $\overline{P}(s,a,s',*_{t\geq N})=0$ is optimal for \eqref{rework}---decreasing $\overline{P}(s,a,s',*_{t\geq N})=0$ and increasing any other $\overline{P}(s,a,s'',*_{t\geq N})$ will only decrease the function value. Hence, we know that the solution to \eqref{rework} is given by $\overline{P}(s,a,s',*_{t\geq N})=0$ whenever $P(s,a,s',*_{t\geq N})=0$. Values of $\overline{P}(s,a,\cdot,t)$ for $t<N$ can, as discussed, be anything that satisfies the problem constraints. Finally, we note that the solution that maximizes $\sum P(s,a,s',*_{t\geq N})\log\overline{P}(s,a,s',*_{t\geq N})$ where all $P(s,a,s',*_{t\geq N})>0$ is the same as the solution that maximizes $\prod \overline{P}(s,a,s',*_{t\geq N})^{P(s,a,s',*_{t\geq N})}$. By simple KKT analysis, we obtain that the solution to this problem is unique and rather obvious: $\overline{P}(s,a,s',*_{t\geq N})=P(s,a,s',*_{t\geq N})$. Because of the uniqueness of the solution, and given that we know that the objective function value of $\tilde{P}^{T'}(s,a,\cdot,\cdot)$ in \eqref{rework} converges to the minimum, we obtain $\tilde{P}^{T'}(s,a,\cdot,*_{t\geq N})\to \overline{P}(s,a,s',*_{t\geq N})$.
\end{proof}

\begin{proof}\textbf{of Theorem \ref{secbig}}\quad
Suppose first that $(s,a)$ has not been visited before $t=N$. Let $N\leq T_0<T_1<\ldots$ denote the times at which $(s,a)$ is been visited. After decoupling \eqref{bigop2} into $|A||S|$ optimization problems by fixing $s\in S$ and $a\in A$, the choice $\tilde{P}^{T_i+1}(s,a,s',\cdot)=1$ and $\tilde{P}^{T_i+1}(s,a,s'',\cdot)=0$ for all $s''\neq s'$, for all $i\geq 0$, clearly minimizes the objective function $$-\sum_{r=0}^i \log\tilde{P}^{T_i+1}(s,a,s',T_r)$$ for all $i\geq 0$, while satisfying the constraints of \eqref{bigop2}.

We now consider the case when $(s,a)$ has been visited before $t=N$. Let $T_0$ be the last time at which $(s,a)$ is visited before $t=N$, and let $T_0<T_1<\ldots<T_{k-1}<N+1/\varepsilon\leq T_k<T_{k+1}<\ldots$, $k\geq 1$, denote all times at which $(s,a)$ is visited starting at $T_0$. We claim that
\begin{equation}
\label{claimthe}
\tilde{P}^{T_i+1}(s,a,s',T_i)\in \left[\min\left(1,\tilde{P}^{T_{i-1}+1}(s,a,s',T_{i-1})+(T_i-T_{i-1})\varepsilon\right),1\right]
\end{equation}
for all $i\geq 1$. 

Assume that \eqref{claimthe} does not hold. Since $\tilde{P}^{T_i+1}(s,a,s',T_i)\leq 1$, we have 
\begin{equation}
\label{addeq}
\tilde{P}^{T_i+1}(s,a,s',T_i)<\min\left(1,\tilde{P}^{T_{i-1}+1}(s,a,s',T_{i-1})+(T_i-T_{i-1})\varepsilon\right)\textrm{.}
\end{equation}
Now, define an alternative choice of transition probabilities as follows: 
\begin{equation}
\label{newp}
\underline{P}^{T_i+1}(s,a,s^*,T)=
\begin{cases}
\tilde{P}^{T_{i-1}+1}(s,a,s^*,T) \qquad \qquad \textrm{ for all } s^*\in S\textrm{, } T\leq T_{i-1}\textrm{,} \\
\min\left(1,\tilde{P}^{T_{i-1}+1}(s,a,s',T_{i-1})+(T_i-T_{i-1})\varepsilon\right) \textrm{ if } s^*=s' \textrm{ and } T=T_i\textrm{.}
\end{cases}
\end{equation}
Let $d_P^1=\underline{P}^{T_{i}+1}(s,a,s',T_i)-\tilde{P}^{T_{i-1}+1}(s,a,s',T_{i-1})>0$. We define $\underline{P}^{T_{i}+1}(s,a,s^*,T_{i})$ for $s^*\neq s'$ in the following way: if $S=\{s^1,\ldots,s^n\}$, where $s'=s^1$, then recursively define 
\begin{align}
\label{newp21}
& \underline{P}^{T_{i}+1}(s,a,s^r,T_{i}) =\max\left(0,\tilde{P}^{T_{i-1}+1}(s,a,s^r,T_{i-1})-d_P^{r-1}\right)\textrm{,} \\
\label{newp22}
& d^r =\tilde{P}^{T_{i-1}+1}(s,a,s^r,T_{i-1})-\underline{P}^{T_{i}+1}(s,a,s^r,T_{i})\textrm{,} \\
\label{newp23}
& d_P^r=d_P^{r-1}-d^r\textrm{,}
\end{align}
for $r\geq 2$. We also define $d^1=d_P^1$.

We will show that $\underline{P}^{T_{i}+1}(s,a,\cdot,T_{i})\geq 0$ as defined in \eqref{newp}--\eqref{newp23} is a legitimate discrete probability distribution:
\begin{enumerate}
\item[1)] By \eqref{newp} and \eqref{newp21}, $\underline{P}^{T_{i}+1}(s,a,s^r,T_{i})\geq 0$ for all $r$. 
\item[2)] By \eqref{newp} and \eqref{newp22}, \begin{equation*}
\begin{split}
& \sum_{r=1}^n \underline{P}^{T_{i}+1}(s,a,s^r,T_{i})=\sum_{r=1}^n\tilde{P}^{T_{i-1}+1}(s,a,s^r,T_{i-1})+d^1-\sum_{r=2}^n d^r \\
& =1+d^1-\sum_{r=2}^n d^r=1+d_P^n\geq 1\textrm{.}
\end{split}
\end{equation*}
We claim that $d_P^n=0$. If $\tilde{P}^{T_{i-1}+1}(s,a,s^r,T_{i-1})\geq d_P^{r-1}$ for any $r\geq 2$, then $d_P^r=0$ by \eqref{newp21}. Subsequently $d^{r+1}=0$ by \eqref{newp21}--\eqref{newp22} and thus $d_P^{r+1}=0$ by \eqref{newp23}. Continuing onwards, we get $d_P^n=d_P^{n-1}=\cdots=d_P^r=0$. Thus, if $d_P^n>0$, then $\tilde{P}^{T_{i-1}+1}(s,a,s^r,T_{i-1})< d_P^{r-1}$ for all $r\geq 2$. Hence, by \eqref{newp21}, $\underline{P}^{T_{i}+1}(s,a,s^r,T_{i})=0$ for all $r\geq 2$. Then, $$1+d_P^n=\sum_{r=1}\underline{P}^{T_{i}+1}(s,a,s^r,T_{i})=\underline{P}^{T_{i}+1}(s,a,s^1,T_{i})\leq 1\textrm{,}$$ where the inequality holds by \eqref{newp}. Thus, $d_P^n\leq 0$, contradicting the assumption $d_P^n>0$.
\end{enumerate}
Additionally, by combining \eqref{newp22} and \eqref{newp23}, and using \eqref{newp21}, $d^r\geq 0$ and $d^r\leq d_P^{r-1}\leq d_P^{r-2}\leq \cdots\leq d_P^1\leq (T_i-T_{i-1})\varepsilon$ for all $r$. Thus, $$\left|\underline{P}^{T_{i}+1}(s,a,s^r,T_{i})-\tilde{P}^{T_{i-1}+1}(s,a,s^r,T_{i-1})\right|\leq (T_i-T_{i-1})\varepsilon$$ for all $r$. Hence, for all $s^*\in S$, we define $\underline{P}^{T_i+1}(s,a,s^*,T)$ for $T\in\{T_{i-1}+1,\ldots,T_i-1\}$ by 
\begin{equation}
\label{newp3}
\underline{P}^{T_i+1}(s,a,s^*,T)=\frac{T_i-T}{T_i-T_{i-1}}\underline{P}^{T_i+1}(s,a,s^*,T_{i-1})+\frac{T-T_{i-1}}{T_i-T_{i-1}}\underline{P}^{T_i+1}(s,a,s^*,T_i)\textrm{,}
\end{equation}
thus ensuring that $|\underline{P}^{T_i+1}(s,a,s^*,T+1)-\underline{P}^{T_i+1}(s,a,s^*,T)|\leq\varepsilon$ remains satisfied for all $T$. It can also be easily verified that $\underline{P}^{T_i+1}(s,a,s^*,T)\geq 0$ for all $s^*$, and that these values sum up to $1$.

We verified that $\underline{P}^{T_i+1}_{t<T_i+1}$, as defined in \eqref{newp}--\eqref{newp3}, satisfies all the constraints in \eqref{bigop2}. The value of the objective function for $\underline{P}^{T_i+1}_{t<T_i+1}$ is strictly lower than for $\tilde{P}^{T_i+1}_{t<T_i+1}$: the values for $\underline{P}^{T_i+1}_{t<T_{i-1}+1}$ have been chosen to be optimal, and the only other element present in the objective function, $\underline{P}^{T_i+1}(s,a,s',T_i)$, satisfies $\underline{P}^{T_i+1}(s,a,s',T_i)>\tilde{P}^{T_i+1}(s,a,s',T_i)$ by \eqref{addeq} and \eqref{newp}. Thus, we reached a contradiction with $\tilde{P}^{T_i+1}_{t<T_i+1}$ being a CCMLE.

Claim \eqref{claimthe} is thus proved. Now, for each $i\geq 1$ we either have $\tilde{P}^{T_i+1}(s,a,s',T_i)\geq \tilde{P}^{T_{i-1}+1}(s,a,s',T_{i-1})+(T_i-T_{i-1})\varepsilon$ or $\tilde{P}^{T_i+1}(s,a,s',T_i)=1$. Assume $\tilde{P}^{T_i+1}(s,a,s',T_i)<1$ for some $i\geq k$, i.e., for $T_i\geq N+1/\varepsilon$. Then, $\tilde{P}^{T_{i-1}+1}(s,a,s',T_{i-1})<1-(T_i-T_{i-1})\varepsilon$. Continuing onwards, we obtain that $\tilde{P}^{T_0+1}(s,a,s',T_0)<1-(T_i-T_0)\varepsilon$. Since $T_i-T_0\geq 1/\varepsilon$, we obtain $\tilde{P}^{T_0+1}(s,a,s',T_0)<1$, i.e., a contradiction.
\end{proof}

\begin{proof}\textbf{of Lemma \ref{lemunc}}\quad
After decoupling \eqref{bigop2}, the objective function for $(s,a)$ equals $$-\sum_{i=0}^k \log\tilde{P}^T(s,a,s_{T_i+1},T_i)\textrm{.}$$
Assume that there exist two solutions $\tilde{P}_1^T(s,a,s_{T_i+1},T_i)$ and $\tilde{P}_2^T(s,a,s_{T_i+1},T_i)$ yielding the same minimal value of the objective function. Then, by a simple convexity argument \citep{Bec14}, $\lambda \tilde{P}_1^T(s,a,s_{T_i+1},T_i)+(1-\lambda) \tilde{P}_2^T(s,a,s_{T_i+1},T_i)$ all need to yield the same value, for all $\lambda\in[0,1]$. Thus, 
\begin{equation}
\label{takeder}
-\sum_{i=0}^k \log\left(\lambda \tilde{P}_1^T(s,a,s_{T_i+1},T_i)+(1-\lambda) \tilde{P}_2^T(s,a,s_{T_i+1},T_i)\right)
\end{equation}
is a constant function of $\lambda\in[0,1]$. Taking the derivative of \eqref{takeder} with respect to $\lambda$, we obtain $$\sum_{i=0}^k\frac{\tilde{P}_1^T(s,a,s_{T_i+1},T_i)-\tilde{P}_2^T(s,a,s_{T_i+1},T_i)}{\lambda \tilde{P}_1^T(s,a,s_{T_i+1},T_i)+(1-\lambda) \tilde{P}_2^T(s,a,s_{T_i+1},T_i)}=0$$
for all $\lambda\in(0,1)$. Taking the second derivative, we obtain $$\sum_{i=0}^k\frac{(\tilde{P}_1^T(s,a,s_{T_i+1},T_i)-\tilde{P}_2^T(s,a,s_{T_i+1},T_i))^2}{(\lambda \tilde{P}_1^T(s,a,s_{T_i+1},T_i)+(1-\lambda) \tilde{P}_2^T(s,a,s_{T_i+1},T_i))^2}=0\textrm{.}$$
In other words, $\tilde{P}_1^T(s,a,s_{T_i+1},T_i)=\tilde{P}_2^T(s,a,s_{T_i+1},T_i)$ for all $i$.
\end{proof}

\begin{proof}\textbf{of Theorem \ref{next}}\quad Since $\varepsilon_t=0$ for all $t\in\NN_0$, any CCMLE will be time-invariant. Thus, $U^t_{\sigma,\alpha}(s,a)$ is the same for all $t\leq T$. We denote the set of all CCMLEs $\tilde{P}^T(s,a,\cdot,*)$ computed at time $T$, with the agent's previous path $\sigma$ and actions $\alpha$, by $\cP^T_{\sigma,\alpha}(s,a,*)$.

If $\#(s,a)=0$, the claim is obvious, as $\cP^T_{\sigma,\alpha}(s,a,*)$ is the entire probability simplex, so $U^t_{\sigma,\alpha}(s,a)=\sqrt{2}$. Assume now that $\#(s,a)\geq 1$. By the proof of Proposition \ref{gener}, $\cP^T_{\sigma,\alpha}(s,a,*)\subseteq\RR^{|S|}$ contains a single element given by components $\tilde{P}^T(s,a,s',*)=\#(s,a,s')/\#(s,a)$, where $\#(s,a,s')$ denotes the number of times that transition $(s,a,s')$ has occurred among the first $T-1$ transitions. Thus, the diameter of $\cP^T_{\sigma,\alpha}(s,a,*)$ is $0$. 

For each $s''\in S$, again by the proof of Proposition \ref{gener}, set $\cP^{T+1}_{\overline{\sigma}_{s''},\overline{\alpha}}(s,a,*)$ also contains a single element given by $\tilde{P}^{T+1}(s,a,s'',*)=(\#(s,a,s'')+1)/(\#(s,a)+1)$ and $\tilde{P}^{T+1}(s,a,s',*)=\#(s,a,s')/(\#(s,a)+1)$ for all $s'\neq s''$.

Thus, the maximum distance between points in polytopes $\cP^{T}_{\sigma,\alpha}(s,a,*)$ and $\cP^{T+1}_{\overline{\sigma}_{s''},\overline{\alpha}}(s,a,*)$ equals 
\begin{equation}
\label{valdis} 
\begin{split}
& \sqrt{\left(\frac{\#(s,a,s'')+1}{\#(s,a)+1}-\frac{\#(s,a,s'')}{\#(s,a)}\right)^2+\sum_{s'\neq s''}\left(\frac{\#(s,a,s')}{\#(s,a)+1}-\frac{\#(s,a,s')}{\#(s,a)}\right)^2} =\\
& \frac{1}{\#(s,a)(\#(s,a)+1)}\sqrt{\sum_{s'\neq s''}\#(s,a,s')^2+(\#(s,a)-\#(s,a,s''))^2}\textrm{.}
\end{split}
\end{equation}
On one hand, by the power mean inequality \citep{Bul03}, the value of \eqref{valdis} is greater than or equal to 
\begin{equation}
\label{geqdis}
\begin{split}
& \frac{1}{\#(s,a)(\#(s,a)+1)}\sqrt{\frac{\left(\sum_{s'\neq s''}\#(s,a,s')\right)^2}{|S|-1}+(\#(s,a)-\#(s,a,s''))^2} \\
& = \frac{1}{\#(s,a)(\#(s,a)+1)}\sqrt{\frac{\left(\#(s,a)-\#(s,a,s'')\right)^2}{|S|-1}+(\#(s,a)-\#(s,a,s''))^2} \\ & =\frac{\left(\#(s,a)-\#(s,a,s'')\right)\sqrt{|S|}}{\#(s,a)(\#(s,a)+1)\sqrt{|S|-1}}\textrm{.}
\end{split}
\end{equation}

We are interested in determining the maximal value of \eqref{valdis} over different $s''\in S$. There exists $s''$ such that $\#(s,a,s'')\leq \#(s,a)/|S|$. By plugging in this $s''$ into \eqref{geqdis}, we obtain that the value of \eqref{valdis} is greater than or equal to $\sqrt{1-1/|S|}/(1+\#(s,a))$.

On the other hand, the value of \eqref{valdis} is trivially less than or equal to $$\frac{1}{\#(s,a)(\#(s,a)+1)}\sqrt{\left(\sum_{s'\in S} \#(s,a,s')\right)^2+\#(s,a)^2}=\frac{\sqrt{2}}{\#(s,a)+1}$$ for any $s''$. By \eqref{defunc}, we obtain the desired claim.
\end{proof}

\bibliography{refs}

\end{document}